\newcommand{\ignore}[1]{}
\newcommand{\eq}{~=~}
\newcommand{\lee}{~\le~}
\newcommand{\gee}{~\ge~}
\newcommand{\lr}[1]{\!\left(#1\right)\!}
\newcommand{\lrset}[1]{\left\{#1\right\}}
\newtheorem{theorem}{Theorem}
\newtheorem{lemma}[theorem]{Lemma}
\newtheorem{corollary}[theorem]{Corollary}
\newtheorem{definition}[theorem]{Definition}
\newtheorem*{theorem*}{Theorem}
\newtheorem*{lemma*}{Lemma}
\newtheorem*{corollary*}{Corollary}
\newtheorem*{definition*}{Definition}
\newtheorem*{claim*}{Claim}
\newtheorem*{fact*}{Fact}
\DeclareMathOperator*{\argmin}{arg\,min}
\newcommand{\be}{\begin{align}}
\newcommand{\en}{\end{align}}
\newcommand{\ben}{\begin{align*}}
\newcommand{\enn}{\end{align*}}
\renewcommand{\t}[1]{\tilde{#1}}
\newcommand{\reals}{\mathbb{R}}
\newcommand{\D}{\mathcal{D}}
\renewcommand{\O}{O}
\newcommand{\tO}{\t{\O}}
\newcommand{\tOmega}{\t{\Omega}}
\newcommand{\E}{\mathbf{E}}
 \newcommand{\tens}{\otimes}
 \newcommand{\btens}{\bigotimes}
 \newcommand{\sign}{\text{sign}{}}
\newcommand{\A}{\mathcal{A}}
\newcommand{\F}{\mathcal{F}}
\newcommand{\G}{\mathcal{G}}
\newcommand{\hatg}{\hat{g}}
\newcommand{\tg}{\tilde{g}}
\newcommand{\tf}{\tilde{f}}
\newcommand{\alphat}{\alpha_{t}}
\newcommand{\alphatp}{\alpha_{t+1}}
\newcommand{\alphaz}{\alpha_{0}}
\newcommand{\Deltat}{\Delta_t}
\newcommand{\N}{\mathcal{N}}
\newcommand{\I}{\mathcal{I}}
\title{The Power of Normalization: Faster Evasion of Saddle Points}%
\author{%
Kfir Y. Levy\footnote{Department of Computer Science, ETH Z\"urich. 
Email :\texttt{yehuda.levy@inf.ethz.ch}.}
}
\date{October 2016}                                           
\begin{document}
\maketitle

\begin{abstract} 
A commonly used heuristic in non-convex optimization is   Normalized Gradient Descent (NGD) - a variant of gradient descent in which only the direction of the gradient is taken into account and its magnitude ignored. We analyze this heuristic and show that with carefully chosen parameters and noise injection, this method can provably evade saddle points. 
We establish the convergence of NGD to a local minimum, and demonstrate rates which
 improve upon the fastest known first order algorithm  due to \cite{ge}. 
 The effectiveness of our method  is demonstrated via an application to the problem of online tensor decomposition; a task  for which saddle point evasion is known to result in convergence to global minima. 
\end{abstract}

\section{Introduction}
Owing to its efficiency, simplicity and intuitive interpretation, Gradient Descent (GD)  and its numerous variants are often the  method of choice in 
large-scale optimization tasks, 
including neural network optimization \cite{bengio2009learning}, ranking \cite{ranking},  matrix completion \cite{jain2010guaranteed}, and reinforcement learning \cite{GradRL}. Normalized Gradient Descent (NGD) is a less popular alternation of GD, which enjoys the same  efficiency and simplicity.

Exploring the limitations of applying GD to non-convex tasks is an important an active line of research.
Several phenomena have been found to prevent GD from attaining  satisfactory results.  Among them are local-minima, and saddle points.
Local minima might entrap  GD, preventing it from reaching a possibly better solution. Gradients vanish near saddle points, which causes  GD to stall.
This saddle phenomenon was recently studied in the context of deep learning, where it was established both empirically and theoretically that saddles are prevalent in such tasks, and cause GD to decelerate \cite{saad1995exact,saxe2,BengioDauphin2014identifying,choromanska2015loss}. Several empirical studies suggest that arriving at  local minima is satisfactory for deep learning tasks: in \cite{BengioDauphin2014identifying}, it is asserted that all local minima are of approximately the same quality as the global one. Additionally, in \cite{choromanska2015loss} it is  argued that global minima may cause overfitting, while local minima are likely to yields sufficient generalization. Very recently, evading saddles was found to be crucial in other non-convex applications,
among them are complete dictionary recovery \cite{sun2015complete}, phase retrieval \cite{sun2016geometric}, and Matrix completion \cite{ge2016matrix}.

This paper studies the minimization of strict-saddle  functions $f:\reals^d \mapsto \reals$, which were previously introduced in the pioneering work of Ge et al., \cite{ge}.
Strict-saddle is a property that captures  multi-modality as well as the  saddle points phenomenon.
Motivated by \cite{choromanska2015loss,BengioDauphin2014identifying}, our goal is to approach at  some local minimum of $f$ while evading saddles. 
We show that  Saddle-NGD, a version of NGD which combines noise perturbations,  is more appropriate for this task than the noisy-GD algorithm proposed in \cite{ge}.
In particular, we show that in the offline  setting our method requires $\tO(\eta^{-3/2})$ iterations in order to reach a solution which is $\eta$-approximately (locally) optimal. This improves upon  noisy-GD which requires $\tO(\eta^{-2})$ iterations.
Moreover, we show that Saddle-NGD requires $\tO(d^3)$ iterations in order to arrive at a basin of some  local minimum, while noisy-GD  requires $\tO(d^4)$ iterations. In the stochastic optimization setting, we can extend our method to achieve the same sample complexity as noisy-GD.
Since a single iteration of NGD/GD costs the same, our results imply that  NGD   ensures an improved running rime over noisy-GD.

Our experimental study demonstrates the benefits of using Saddle-NGD for the setting of online tensor decomposition.
The tensor decomposition optimization problem contains both local minima and saddle points with the interesting property that any local minimum  is also a global minimum.

\subsection{Related Work}
Non-convex optimization problems are in general NP-hard and thus most literature on iterative optimization methods focuses on local guarantees. 
The most  natural guarantee to expect GD in this context is to approach a stationary point, namely a point where gradients vanish.
This kind of guarantees is provided in \cite{ allen2016variance, ghadimi2013stochastic}, which focus on the stochastic setting.
Approaching local minima while evading saddle points is  much more  intricate than ensuring stationary points.  
In  \cite{BengioDauphin2014identifying,sun2016geometric}, it is demonstrated how to avoid saddle 
points through a trust region approach which utilizes second order derivatives. 
A perturbed version of GD was  explored in \cite{ge}, showing both theoretically and empirically that it  efficiently evades saddles. Very recently, it was shown in \cite{AABHM2017} how to use second order methods in order to rapidly approach local minima. Nevertheless, this approach is only practical  in settings where Hessian-vector products can  be done efficiently.

NGD is well known to guarantee convergence in the offline settings of convex and quasi-convex optimization \cite{nesterov1984minimization,kiwiel2001convergence} .
 Recently, a stochastic version of NGD was explored in the context of  stochastic quasi-convex optimization, \cite{hazan2015beyond}.

\section{Setting and Assumptions}
We discuss the offline optimization setting, where we aim at minimizing a continuous smooth function $f:\reals^d\mapsto \reals$.
The optimization procedure lasts for $T$ rounds, at each round $t\in [T]$ we may query $x_t\in \reals^d$ and receive the gradient, $\nabla f(x_t)$, as a feedback. After the last round we aim at finding a point $\bar{x}_T$ such that $f(\bar{x}_T)-f(x^*)$ is small, where $x^*$ is some local minimum of $f$.

\paragraph{Strict-Saddle-Property:}
We focus on the optimization of strict-saddle functions, a family of multi-modal functions which encompasses objectives that acquire saddle points. Interestingly, \cite{ge} have found that tensor decomposition problems acquire this strict-saddle property.

\begin{definition}[Strict-saddle]
Given $\alpha,\gamma,\nu,r>0$ , a
 function $f$ is $(\alpha,\gamma,\nu,r)$-strict-saddle if for \textbf{any} $x\in \reals^d$ at least one of the following applies:
\begin{enumerate}
\item $\|\nabla f(x) \|\geq \nu$
\item $\lambda_{min}(\nabla^2 f(x))\leq -\gamma$
\item There exists a local minimum $x^*$ such that $\|x-x^* \|\leq r$, and the function $f(x)$ restricted to a $2r$ neighbourhood of  $x^*$ is $\alpha$-strongly-convex. 
\end{enumerate}
\end{definition}
here $\nabla^2 f(x)$ denotes the Hessian of $f$ at $x$, and $\lambda_{\min}(\nabla^2 f(x))$ is the minimal eigenvalue of
the Hessian.
According to the strict-saddle property, for any  $x\in \reals^d$  at least one of three scenarios applies:
either the gradient at $x$ is large, otherwise either we are close to a local minimum or
we are in the proximity of a saddle point.

We also assume that $f$ is \emph{$\beta$-smooth}, meaning:
\begin{align*}
\forall x,y \in \reals^d,\quad \|\nabla f(x) - \nabla f(y) \| \leq \beta \| x-y\|~.
\end{align*}
Lastly, we assume that $f$ has \emph{$\rho$-Lipschitz Hessians}:
\begin{align*}
\forall x,y \in \reals^d,\quad \|\nabla^2 f(x) - \nabla^2 f(y) \| \leq \rho \| x-y\|~.
\end{align*}
here $\| \cdot \|$ denotes the spectral norm when applied to matrices, and the $\ell_2$ norm when applied to vectors.

Finally, we recall the  definition of  strong-convexity:
\begin{definition}
A function $f: \reals^d\mapsto \reals$ is called $\alpha$-strongly-convex if for any $x,y\in\reals^d$ the following applies:
\begin{align*}
 f(y)-f(x) \geq \nabla f(x)^\top(y-x) +\frac{\alpha}{2}\| x-y\|^2~.
\end{align*}
\end{definition}

\section{Saddle-NGD algorithm and Main Results}
In Algorithm~\ref{algorithm:SNGD} we present Saddle-NGD, an algorithm that is adapted to handle strict-saddle functions. 
The main difference from GD is the  use of the direction of the gradients rather than the gradients themselves. Note that most updates are noiseless, yet once every $N_0$ rounds we add zero mean gaussian noise $\theta n_t, \text{where } n_t \sim \N(0,I_d)$, here $I_d$ is the $d$-dimensional identity matrix. 
The noisy updates ensure that once we arrive near a saddle, the direction with most negative eigenvalue will be sufficiently large. This in turn implies a fast evasion of the saddle.
\begin{algorithm}[t]
\caption{Saddle-Normalized Gradient Descent (Saddle-NGD) }
\label{algorithm:SNGD}
\begin{algorithmic}
\STATE \textbf{Input}:  $x_0\in \reals^d$,  $T_{\text{tot}}$, learning rate $\eta$ 
\STATE \textbf{Set}:  noise level $\theta=\tilde{\Theta}(\eta)$, noise period $N_0=\tilde{\Theta}(\eta^{-1/2})$  
\FOR{$t=0 \ldots T_{\text{tot}}$ }
\STATE \textbf{Let} $g_t := \nabla f(x_t)$, and $\hat{g}_t = \frac{g_t}{\|g_t\|}$ 
      \STATE \textbf{if} $(t\;  \text{mod}\; N_0) = 0$  \textbf{then} $n_t \sim\N(0,\mathcal{I}_d)$ \textbf{else} $n_t=0$  \textbf{end if}
   \STATE \textbf{Update}
      \begin{equation*}
           x_{t+1}= x_{t}-\eta \hat{g}_{t}+\theta n_t
        \end{equation*}
   \ENDFOR
\end{algorithmic}
\end{algorithm}

Following is the main theorem of this paper:
\begin{theorem}\label{theorem:MAinPaper}
Let $\xi \in (0,1)$,  $\eta \in (0,{\eta_{\max}})$. Also assume that $f:\reals^d\mapsto\reals$ is $(\alpha,\gamma,\nu,r)$-strict-saddle, $\beta$-smooth,  has $\rho$-Lipschitz Hessians, and also $|f(x)|\leq B;\; \forall x\in\reals^d$.
Then  w.p.$\geq 1-\xi$, Algorithm~\ref{algorithm:SNGD} reaches a point which is $\tO(\sqrt{\eta})$-close to some local minimum $x^*$ of $f$ within $T_{\text{tot}}=\tO(\eta^{-3/2})$ optimization steps. Moreover:
$$f(x_{T_{\text{tot}}})-f(x^*)\leq \tO(\eta)~. $$
\end{theorem}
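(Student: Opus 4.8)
The plan is to treat $f$ itself as a potential function and to cut the run into consecutive epochs of length $N_0=\tO(\eta^{-1/2})$ delimited by the noise injections. Everything reduces to a single progress lemma: as long as the current iterate is not yet inside the $2r$-neighbourhood of a local minimum on which $f$ is $\alpha$-strongly convex (Case~3 of the strict-saddle definition), one epoch decreases $f$ by $\Omega(\eta)$ with high probability. Granting this, since $|f|\le B$ the number of not-yet-converged epochs is at most $\tO(B/\eta)=\tO(\eta^{-1})$; multiplying by the epoch length $N_0=\tO(\eta^{-1/2})$ gives the claimed $T_{\text{tot}}=\tO(\eta^{-3/2})$ bound, and a union bound over the epochs (each tuned to fail with probability $\le\xi/\mathrm{poly}$) yields the overall $1-\xi$ guarantee. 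The proof of the progress lemma splits according to which of the three strict-saddle alternatives holds at the start of the epoch.

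When $\|\nabla f(x_t)\|\ge\nu$, $\beta$-smoothness applied to a noiseless normalized step gives $f(x_{t+1})\le f(x_t)-\eta\|\nabla f(x_t)\|+\frac{\beta}{2}\eta^2\le f(x_t)-\eta\nu+\frac{\beta}{2}\eta^2$, a decrease of $\Omega(\eta)$ per step once $\eta<\eta_{\max}\sim\nu/\beta$. Here the normalization is exactly what helps: the guaranteed per-step drop is $\Omega(\eta)$ rather than the $\Omega(\eta\|\nabla f\|^2)$ of (noisy) gradient descent, which is the source of the $\eta^{-3/2}$-versus-$\eta^{-2}$ improvement over \cite{ge}. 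Summed over an epoch these steps over-deliver, so it only remains to check that the single noisy step per epoch contributes at most a lower-order increase; by smoothness this increase is $O(\beta\theta^2 d)$, which is $o(\eta)$ for $\theta=\tO(\eta)$ in the regime $\eta=\tO(1/d)$ implicit in $\eta_{\max}$.

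The crux is Case~2, $\lambda_{\min}(\nabla^2 f(x_t))\le-\gamma$. Let $x_s$ be the nearby (approximate) saddle, $e$ a unit eigenvector of $\nabla^2 f(x_s)$ with eigenvalue $\le-\gamma$, and track the signed component $v_t=\langle e,\,x_t-x_s\rangle$. Expanding the gradient to first order and bounding the remainder via the $\rho$-Lipschitz-Hessian assumption gives $\langle e,\nabla f(x_t)\rangle=-\gamma v_t+O(\rho\|x_t-x_s\|^2)$, so the normalized update obeys $v_{t+1}\approx v_t\big(1+\eta\gamma/\|g_t\|\big)$ and the negative direction is repelled monotonically. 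Because a fresh Gaussian kick reseeds this component, anti-concentration ensures $|v_0|=\tOmega(\eta)$ at the start of the epoch with high probability (this is where the $1/\sqrt{d}$ and the dimension dependence enter). The main obstacle is that, unlike the clean linear repulsion $v_{t+1}=(1+\eta\gamma)v_t$ available to \emph{non}-normalized noisy GD, the growth factor is now \emph{state-dependent}: the common normalizer $\|g_t\|$ couples $e$ to all other directions, so certifying escape requires a uniform control of $\|g_t\|\le\beta\|x_t-x_s\|$ along the whole epoch together with control of the quadratic Taylor remainder. Carrying this out shows the iterate is displaced along $e$ by $\Omega(\eta N_0)=\Omega(\sqrt\eta)$ over one epoch, which by the negative curvature lowers $f$ by $\Omega(\gamma\eta)=\Omega(\eta)$, dominating the $O(\beta\theta^2 d)$ noise increase and establishing the progress lemma in this case. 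This displacement-per-epoch balance is precisely what dictates the choices $N_0=\tO(\eta^{-1/2})$ and $\theta=\tO(\eta)$.

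Finally, once the iterate enters the $2r$ strongly-convex neighbourhood of a local minimum $x^*$ (Case~3), it need not leave: each noise kick has norm $\theta\|n_t\|=\tO(\eta\sqrt d)\ll r$, so with high probability the iterate stays trapped, and within a strongly-convex basin the normalized step points toward $x^*$ and contracts the distance by $\Omega(\eta)$ per step until $\|x_t-x^*\|=\tO(\eta)$. This drives the gap to $f(x_t)-f(x^*)=\tO(\eta)$, and strong convexity converts it back into distance via $\frac{\alpha}{2}\|x_t-x^*\|^2\le f(x_t)-f(x^*)$, giving $\|x_t-x^*\|\le\sqrt{2(f(x_t)-f(x^*))/\alpha}=\tO(\sqrt\eta)$, as claimed. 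Combining the three cases of the progress lemma with the epoch accounting of the first paragraph completes the proof, the saddle-escape analysis of Case~2 being the one genuinely delicate ingredient.
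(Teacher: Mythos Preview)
Your overall architecture---partition the run into epochs of length $N_0=\tO(\eta^{-1/2})$ aligned with the noise injections, show a per-epoch $\Omega(\eta)$ drop while not in the strongly-convex basin, and conclude via the potential argument---is sound and matches the paper's per-step stopping-time accounting (which shows an average $\tOmega(\eta^{3/2})$ drop per step). Both routes give $\tO(\eta^{-3/2})$.

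The genuine gap is in your saddle case. Two related issues:

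\textbf{(i) Wrong gradient threshold.} You split according to the strict-saddle trichotomy, so your ``Case~2'' only guarantees $\|g_0\|<\nu$, a constant. But your own recursion $v_{t+1}\approx v_t\bigl(1+\eta\gamma/\|g_t\|\bigr)$ has growth factor $1+O(\eta)$ when $\|g_t\|=\Theta(1)$; over $N_0=\tO(\eta^{-1/2})$ steps this amplifies $v$ only by a factor $e^{\tO(\sqrt\eta)}\approx 1$, so no escape. The multiplicative growth you need ($1+\Omega(\sqrt\eta)$ per step) holds only once $\|g_t\|=O(\sqrt\eta)$. The paper handles this by cutting at $\beta\sqrt\eta$ rather than $\nu$: points with $\|g\|\ge\beta\sqrt\eta$ are treated as ``large-gradient'' (each step then drops $f$ by $\Omega(\eta^{3/2})$, which over $N_0$ steps still gives $\Omega(\eta)$), and the saddle analysis is invoked only when $\|g_0\|\le\beta\sqrt\eta$. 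Relatedly, your heuristic ``displaced along $e$ by $\Omega(\eta N_0)$'' presumes the normalized gradient has an $\Omega(1)$ component along $e$ from the outset; in fact that component is $\tO(\eta)/\|g_t\|$ at the start, and the $\Omega(\sqrt\eta)$ displacement emerges only after the exponential phase---again requiring $\|g_t\|=O(\sqrt\eta)$.

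\textbf{(ii) There is no ``nearby saddle $x_s$''.} The strict-saddle property asserts only $\lambda_{\min}(\nabla^2 f(x_0))\le-\gamma$; it does \emph{not} furnish a nearby critical point, so your bound $\|g_t\|\le\beta\|x_t-x_s\|$ (which needs $\nabla f(x_s)=0$) is unjustified. The paper never posits such an $x_s$: it expands $f$ around $x_0$ itself, tracks the \emph{gradient} component $g_t^{(1)}=e^\top\nabla f(x_t)$ (rather than the position component), and proves $|g_{t+1}^{(1)}|\ge|g_t^{(1)}|\bigl(1+\gamma\sqrt\eta/(4\beta)\bigr)$ as long as $\|g_t\|\le 2\beta\sqrt\eta$, terminating the saddle phase the first time the gradient norm exceeds $2\beta\sqrt\eta$. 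Showing that this termination indeed yields $f(x_t)\le f(x_0)-\Omega(\eta)$ requires the companion control you allude to but do not carry out: that components along \emph{positive} eigenvectors of $H_0$ grow by at most $\tO(\eta^{3/2})$ in value over the epoch (the paper's Corollaries~\ref{cor:LambPositive_General}--\ref{cor:LambNeg_General}).

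Your Case~3 argument is fine (indeed slightly stronger than the paper's, which only asserts the iterate remains $\tO(\sqrt\eta)$-close rather than contracting to $\tO(\eta)$). The fixes for Case~2 are exactly to adopt the $\beta\sqrt\eta$ threshold and to expand around the current small-gradient iterate rather than a hypothetical critical point.
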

The above statement holds for a sufficiently small learning rate $\eta\leq \eta_{\max}$. 
Our analysis shows that  $\eta_{\max} = \tO(1/d^2)$. We improve upon noisy-GD proposed in \cite{ge}, which requires $\tO(\eta^{-2})$ iterations to reach a point $\tO(\eta)$ approximately (locally) optimal, and acquired the same dependence of  $\eta_{\max} = \tO(1/d^2)$.

\textbf{Notation:} Our $\tO(\cdot),\tOmega(\cdot),\tilde{\Theta}(\cdot)$ notation conveys polynomial dependence on $\eta$ and $d$. It hides polynomial factors in $\alpha,\gamma,\beta,\nu,r,\rho,B, \log(1/\eta),\log(d),\log(1/\xi)$.

In fact, since strict-saddle functions are strongly-convex in a $2r$ radius around local minima, our ultimate goal should be arriving $r$-close to such minima. Then we could use  the well known convex optimization machinery to rapidly converge.
The following corollary formalizes the number of rounds required to reach  $r$-close to some local minimum:
\begin{corollary}\label{cor:MAinPaper}
Let $\xi \in (0,1)$. Assume $f$ as in Theorem~\ref{theorem:MAinPaper}, then setting 
$\eta = \min\{\eta_{\max},\tO(r^2)\}$ Algorithm~\ref{algorithm:SNGD} reaches $r$-close to some local minimum of $f$ within 
$T_{\text{tot}}=\tO(\eta_{\max}^{-3/2}) = \tO(1/d^3)$ steps, w.p.$\geq 1-\xi$.
\end{corollary}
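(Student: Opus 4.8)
The plan is to derive the corollary directly from Theorem~\ref{theorem:MAinPaper} by a single suitable choice of the learning rate, so that essentially all the analytical work is already contained in the theorem and what remains is a matter of bookkeeping. Theorem~\ref{theorem:MAinPaper} guarantees, for every admissible $\eta\in(0,\eta_{\max})$, that within $\tO(\eta^{-3/2})$ steps the iterates reach a point that is $\tO(\sqrt{\eta})$-close to some local minimum $x^*$, with probability at least $1-\xi$. The corollary asks for genuine $r$-closeness instead, so the idea is simply to shrink $\eta$ until the guaranteed $\tO(\sqrt{\eta})$ distance drops below $r$, while keeping $\eta$ inside the admissible range $(0,\eta_{\max})$ so that the theorem still applies.

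First I would make the distance requirement explicit. Write the theorem's guaranteed distance as $c\sqrt{\eta}$, where $c$ absorbs the polylogarithmic and problem-dependent factors hidden by the $\tO(\cdot)$ notation (a quantity polynomial in $\alpha,\gamma,\beta,\nu,r,\rho,B$ and the relevant logarithms). The requirement $c\sqrt{\eta}\leq r$ is then equivalent to $\eta\leq (r/c)^2$, a fixed positive threshold that the paper records as $\eta=\tO(r^2)$. Since the theorem additionally imposes $\eta\leq\eta_{\max}$, the two constraints combine into $\eta=\min\{\eta_{\max},\tO(r^2)\}$, which is precisely the choice in the statement. With this $\eta$, applying Theorem~\ref{theorem:MAinPaper} verbatim yields a point that is $r$-close to some local minimum within $\tO(\eta^{-3/2})$ steps, w.p.\ $\geq 1-\xi$.

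It then remains to read off the iteration count. Substituting $\eta=\min\{\eta_{\max},\tO(r^2)\}$ gives $\eta^{-3/2}=\max\{\eta_{\max}^{-3/2},\tO(r^{-3})\}$. Because $r$ is one of the parameters absorbed into the $\tO(\cdot)$ notation, the term $\tO(r^{-3})$ is a constant with respect to $d$, whereas $\eta_{\max}^{-3/2}$ carries the full dimension dependence; moreover $\eta_{\max}<1$ forces $\eta_{\max}^{-3/2}\geq 1$, so $\tO(r^{-3})=\tO(1)\leq\tO(\eta_{\max}^{-3/2})$. Hence the maximum is governed by the $\eta_{\max}$ term and $T_{\text{tot}}=\tO(\eta_{\max}^{-3/2})$. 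Finally, plugging in the bound $\eta_{\max}=\tO(1/d^2)$ established in the analysis of the main theorem gives $\eta_{\max}^{-3/2}=\tO(d^3)$, i.e.\ $T_{\text{tot}}=\tO(d^3)$, matching the count quoted in the introduction.

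The main obstacle here is notational rather than analytical: one must be careful about what the $\tO(\cdot)$ symbol conceals. In particular, the passage from $\min\{\eta_{\max},\tO(r^2)\}$ to the clean bound $\tO(\eta_{\max}^{-3/2})$ relies on treating $r$ (and the remaining problem constants) as fixed while isolating the dependence on $d$, and on the observation $\eta_{\max}<1$ used above to absorb the $r$-dependent term. Once these conventions are pinned down, the corollary is an immediate specialization of the main theorem, with no further probabilistic or optimization-theoretic argument required.
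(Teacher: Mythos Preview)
Your proposal is correct and is exactly the intended derivation: the paper does not supply a separate proof of this corollary, treating it as an immediate specialization of Theorem~\ref{theorem:MAinPaper} once one picks $\eta$ small enough that $\tO(\sqrt{\eta})\le r$ and then reads off the iteration count using $\eta_{\max}=\tO(1/d^2)$. Your handling of the $\tO(\cdot)$ conventions (absorbing the $r^{-3}$ term and arriving at $\tO(d^3)$ steps, which also confirms that the printed $\tO(1/d^3)$ is a typo) matches the paper's notation and the surrounding discussion.
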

Note that the above improves over  noisy-GD which requires $\tO(d^4)$ steps in order to reach a point  $r$-close to some local minimum of $f$ .
\subsection{Stochastic Version of Saddle-NGD}
In the \emph{Stochastic Optimization} setting,  an exact gradient for the objective function is unavailable, and we may only access noisy (unbiased) gradient estimates. This  setting captures some of the most important 
tasks in machine learning, and is therefore the subject of an extensive study.

Our Saddle-NGD algorithm for offline optimization of strict-saddle functions could be extended to the stochastic setting. This could be done by using minibatches in order to calculate the gradient, i.e. instead of $g_t = \nabla f(x_t)$ appearing in 
Algorithm~\ref{algorithm:SNGD}, we use:
$$g_t : = \frac{1}{b}\sum_{i=1}^b \G(x_t,\zeta_i) ~,$$
here $\{\G(x_t,\zeta_i)\}_i$ are independent and unbiased estimates of $\nabla f(x_t)$. Except for this alternation, the stochastic version of Saddle-NGD is similar to the one presented in Algorithm~\ref{algorithm:SNGD}.
We have found that in order to ensure convergence in the stochastic case, a minibatch $b = \text{poly}(1/\eta)$ is required.
This dependence implies that in the stochastic setting Saddle-NGD obtains no better guarantees
than noisy-GD. We therefore omit the proof for the stochastic setting.

Nevertheless, we have found that in practice, the stochastic version of Saddle-NGD demonstrates a superior performance  compared to noisy-GD. 
Even when we employ a moderate minibatch size.
This is illustrated in Section~\ref{sec:Experiments} where Saddle-NGD is applied to the task of online tensor decomposition.

\section{ Analysis Overview }
Our analysis of Algorithm~\ref{algorithm:SNGD} divides according to the three scenarios defined by the strict-saddle property. Here we present the main statements regarding the guarantees of the algorithm for each scenario, and  provide a proof sketch of Theorem~\ref{theorem:MAinPaper}.
For ease of notation we assume that we reach at each scenario at $t=0$. 

In case that  the  gradient\footnote{Note the use of the notation $g_t = \nabla f(x_t)$ 
here and in the rest of the paper.} 
is sufficiently large, the following ensures us to improve by value within one step:
\begin{lemma}\label{lem:LargeGrads}
Suppose that $\|g_0\|\geq \beta \sqrt{\eta}$, and we use the Saddle-NGD Algorithm, then: 
\begin{align*}
\E[f(x_1)-f(x_0)\mid x_0 ]\lee -\beta \eta^{3/2}/2~.
\end{align*}
\end{lemma}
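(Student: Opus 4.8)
The plan is to apply the standard descent lemma for $\beta$-smooth functions to a single Saddle-NGD step and then take expectation over the injected noise. First I would write the update as $x_1-x_0 \eq -\eta\hat{g}_0+\theta n_0$ and invoke $\beta$-smoothness in the quadratic-upper-bound form $f(x_1)\lee f(x_0)+g_0\tr(x_1-x_0)+\tfrac{\beta}{2}\|x_1-x_0\|^2$. The crucial observation, and exactly the place where normalization pays off, is that the inner product of the true gradient with the normalized direction is the gradient norm itself: $g_0\tr\hat{g}_0 \eq \|g_0\|$. This produces a first-order term $-\eta\|g_0\|$ that is linear in $\eta$, rather than the $-\eta\|g_0\|^2$ one gets from vanilla GD; when the gradient is large this linear term is what dominates.

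Next I would take the conditional expectation $\EE{\cdot\mid x_0}$. Since $n_0$ is zero-mean (or identically zero in rounds where no noise is injected), the cross term $\theta g_0\tr n_0$ vanishes in expectation, and so does the cross term inside the squared norm. What survives from the quadratic is $\tfrac{\beta}{2}\EE{\|-\eta\hat{g}_0+\theta n_0\|^2\mid x_0}\eq\tfrac{\beta}{2}(\eta^2+\theta^2 d)$, using $\|\hat{g}_0\|=1$ and $\EE{\|n_0\|^2}=d$ for $n_0\sim\N(0,I_d)$. Combining this with the first-order term and the hypothesis $\|g_0\|\gee\beta\sqrt{\eta}$ yields
\begin{align*}
\EE{f(x_1)-f(x_0)\mid x_0}\lee -\beta\eta^{3/2}+\frac{\beta}{2}\eta^2+\frac{\beta}{2}\theta^2 d~.
\end{align*}

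Finally I would absorb the two positive error terms into half of the leading term, i.e. establish $\tfrac{\beta}{2}(\eta^2+\theta^2 d)\lee\tfrac{\beta}{2}\eta^{3/2}$, which reduces to $\eta^2+\theta^2 d\lee\eta^{3/2}$. The term $\eta^2$ is below $\tfrac{1}{2}\eta^{3/2}$ whenever $\eta$ is sufficiently small, so this part is immediate. The main (and essentially only) delicate point is the noise-variance term $\theta^2 d$: substituting $\theta=\tilde{\Theta}(\eta)$ turns the requirement into $\tO(\eta^2 d)\lee\tfrac{1}{2}\eta^{3/2}$, equivalently $\eta\lee\tO(1/d^2)$. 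This is precisely the regime $\eta\leq\eta_{\max}=\tO(1/d^2)$ assumed in \cref{theorem:MAinPaper}, so the claimed bound $-\beta\eta^{3/2}/2$ follows. I expect the whole argument to be short; the only thing to watch carefully is tracking the dimension dependence so that the Gaussian-perturbation term is correctly charged against $\eta_{\max}$.
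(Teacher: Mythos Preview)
Your proposal is correct and follows essentially the same argument as the paper: apply the smoothness upper bound to the single step, use $g_0\tr\hat g_0=\|g_0\|$, take expectation so the noise cross terms vanish and the quadratic contributes $\tfrac{\beta}{2}(\eta^2+d\theta^2)$, then invoke $\|g_0\|\geq\beta\sqrt{\eta}$ and absorb the error terms via $\eta\leq\tO(1/d^2)$. The paper's proof is identical in structure and in the final parameter accounting.
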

The next lemma ensures that once we arrive close enough to a local minimum we will remain in its proximity. 
\begin{lemma}\label{lem:LocalOptFull}
Suppose that $x_0$ is $r$-close to a local minimum, i.e.,  $ \|x_0 -x^*\| \leq  r $,  and $\|g_0\|\leq \beta \sqrt{\eta} \leq \nu$. Then the following holds for any $0\leq t \leq \tO(\eta^{-3/2})$, w.p.$\geq 1-\xi$:
\begin{align*}
\|x_{t}-x^*\|^2 \lee \max\{ \|x_0-x^*\|^2,\frac{\beta^2}{\alpha^2}\eta\} + \tO(\eta)~.
\end{align*}
\end{lemma}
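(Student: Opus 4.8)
The plan is to track the squared distance to the minimizer, $\Phi_t := \|x_t - x^*\|^2$, and show it obeys a contracting recursion up to controllable noise. Expanding the update gives
\begin{align*}
\Phi_{t+1} = \Phi_t - 2\eta\inner{x_t-x^*,\hat{g}_t} + \eta^2 + 2\theta\inner{x_t-x^*-\eta\hat{g}_t,\,n_t} + \theta^2\|n_t\|^2~.
\end{align*}
Since $x_0$ lies in the $2r$-ball where $f$ is $\alpha$-strongly convex and $\nabla f(x^*)=0$, strong convexity gives both $\inner{g_t,x_t-x^*}\geq\alpha\|x_t-x^*\|^2$ and $\|g_t\|\geq\alpha\|x_t-x^*\|>0$ (so $\hat{g}_t$ is well defined whenever $x_t\neq x^*$), while $\beta$-smoothness gives $\|g_t\|\leq\beta\|x_t-x^*\|$. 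Dividing the first by the upper bound on $\|g_t\|$, normalization yields the key \emph{linear} contraction $\inner{x_t-x^*,\hat{g}_t}\geq(\alpha/\beta)\|x_t-x^*\|$, so that on every round with $\|x_t-x^*\|\leq 2r$,
\begin{align*}
\Phi_{t+1} \leq \Phi_t - 2\eta\tfrac{\alpha}{\beta}\sqrt{\Phi_t} + \eta^2 + 2\theta\inner{x_t-x^*-\eta\hat{g}_t,\,n_t} + \theta^2\|n_t\|^2~.
\end{align*}

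The deterministic part $-2\eta(\alpha/\beta)\sqrt{\Phi_t}+\eta^2$ is strictly negative once $\sqrt{\Phi_t}>\beta\eta/(2\alpha)$, so it pulls $\Phi_t$ down toward a floor far below the target, and the only upward force is the noise, injected on the $\tO(\eta^{-1})$ rounds that are multiples of $N_0$. On those rounds the martingale increment $2\theta\inner{x_t-x^*-\eta\hat{g}_t,\,n_t}$ has zero conditional mean and sub-Gaussian tails (as $\|x_t-x^*\|\leq 2r$ and $\|\hat{g}_t\|=1$ on the good event), and the systematic term $\theta^2\|n_t\|^2$ concentrates around $\theta^2 d=\tilde{\Theta}(\eta^2 d)$ by a $\chi^2$ tail bound. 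The restriction $\eta\leq\eta_{\max}=\tO(1/d^2)$ is exactly what keeps this manageable: it gives $\eta^{1/2}d\leq\tO(1)$, so the per-injection noise $\tilde{\Theta}(\eta^2 d)$ is dominated by the $\tilde{\Theta}(\eta^{3/2})$ step-size error accrued over one block of $N_0=\tilde{\Theta}(\eta^{-1/2})$ rounds, which in turn is dominated by the block contraction. Note, too, that the hypothesis $\|g_0\|\leq\beta\sqrt{\eta}$ together with $\|g_0\|\geq\alpha\|x_0-x^*\|$ already forces $\Phi_0\leq(\beta^2/\alpha^2)\eta$, so the floor named in the statement is simply the worst admissible starting radius.

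To lift this to a uniform-in-$t$, high-probability statement over the horizon $T=\tO(\eta^{-3/2})$, I would argue at the level of the $\tO(\eta^{-1})$ blocks of $N_0$ rounds and introduce the stopping time $\tau$ at which $\Phi_t$ first exceeds $M:=\max\{\Phi_0,(\beta^2/\alpha^2)\eta\}+\tO(\eta)$; since $M\leq(2r)^2$ for $\eta\leq\eta_{\max}$, strong convexity is legitimately in force for all $t<\tau$, closing the induction. Within a block, $\Phi$ can rise by at most the single kick plus the accumulated step-size error, an upward excursion of order $\tO(\eta^{3/2})$, whereas the block contraction is of order $\tilde{\Theta}(\eta^{1/2}\sqrt{\Phi_t})$, equal to $\tilde{\Theta}(\eta)$ at the floor $\Phi_t\asymp(\beta^2/\alpha^2)\eta$; hence above the floor the block-indexed sequence is a supermartingale with bounded increments. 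A maximal inequality for this supermartingale, combined with Azuma--Hoeffding control of the martingale increments and the $\chi^2$ concentration of $\theta^2\|n_t\|^2$, bounds the probability that $\Phi$ ever climbs more than $\tO(\eta)$ above $\max\{\Phi_0,(\beta^2/\alpha^2)\eta\}$; tuning the $\tO(\eta)$ slack and the hidden constants makes this at most $\xi$ after a union bound over the $\tO(\eta^{-1})$ blocks. This gives $\Pr[\tau\leq T]\leq\xi$, which is the claim.

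The main obstacle is precisely that normalization makes the contraction linear in the distance: the drift reduces $\sqrt{\Phi_t}$ rather than $\Phi_t$, so a naive one-step supermartingale fails on the individual noisy rounds, and one is forced to the block level, verifying that the block contraction beats the kick only because $N_0=\tilde{\Theta}(\eta^{-1/2})$ and $\theta=\tilde{\Theta}(\eta)$ are matched to the curvature and because $\eta\leq\eta_{\max}=\tO(1/d^2)$ holds the $\chi^2$ term $\theta^2 d$ below the per-block budget. Calibrating these hidden constants so that reabsorption of each kick, an $\tO(\eta)$ fluctuation band, and a $\leq\xi$ union bound over $\tO(\eta^{-1})$ blocks all hold simultaneously is the delicate quantitative core of the argument.
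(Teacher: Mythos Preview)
Your plan is sound and would succeed, but it deploys more probabilistic machinery than the paper does. Both arguments rest on the same contraction inequality $\inner{x_t-x^*,\hat g_t}\geq(\alpha/\beta)\|x_t-x^*\|$, obtained by combining strong convexity with smoothness. The difference is in how the noise is handled. You set up a block-level supermartingale, invoke Azuma--Hoeffding for the cross term and $\chi^2$ concentration for $\theta^2\|n_t\|^2$, and then appeal to a maximal inequality and a stopping-time argument. The paper instead exploits the algorithm's structure directly: noise is injected only once per block of $N_0=\tilde\Theta(\eta^{-1/2})$ rounds, so it (i) bounds the effect of a single injection on $\|x_t-x^*\|^2$ by $\eta$ via a plain Gaussian tail bound on $\|\theta n_0\|$ (with a union bound over the $\tO(\eta^{-1})$ injections), and then (ii) shows \emph{deterministically} that during the ensuing $N_0$ noiseless rounds the squared distance drops by at least $\eta^{3/2}$ per step whenever it exceeds $(\beta/\alpha)\sqrt{\eta}$, so it falls back below the floor before the next injection. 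After conditioning on the noise magnitudes, the whole proof is an elementary induction with no martingale or maximal inequality in sight. Your route would generalize more readily (e.g., to noise injected every round), while the paper's buys simplicity by leaning on the periodic-noise structure you already identified.
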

The next statement describes the improvement attained by Saddle-NGD  near  saddle points.
\begin{lemma}\label{lem:SaddleGeneralMain}
Let $\xi\in[0,1]$. 
Suppose that $\|g_0\|\leq \beta \sqrt{\eta}\leq \nu$ and we are near a saddle point, meaning that $\lambda_{min}(\nabla^2 f(x_0))\leq -\gamma$.
Then w.p.$\geq 1-\xi$, within  $t\leq \tO( \eta^{-1/2})$ steps, we will have :
$$f(x_t) \lee  f(x_0) -  \tOmega(\eta)~. $$
\end{lemma}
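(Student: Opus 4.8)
The plan is to express the total value change as a telescoping sum of per-step decreases and to show that the normalized gradient does enough negative work, driven by growth along the most-negative-curvature direction. First I would fix $H = \nabla^2 f(x_0)$ with its eigendecomposition, let $e_1$ be a unit eigenvector with eigenvalue $\lambda_{\min}(H) = -\gamma_0 \le -\gamma$, and set $\alpha_t = \langle x_t - x_0, e_1\rangle$. Because each Saddle-NGD step has length $\eta$, over $t \le \tO(\eta^{-1/2})$ steps the displacement $\|x_t - x_0\|$ stays $\tO(\sqrt\eta)$, and this escape window fits inside a single noise period $N_0 = \tilde\Theta(\eta^{-1/2})$; hence after the lone perturbation at $t=0$ the iterates evolve deterministically, and $\beta$-smoothness gives the per-step bound
\begin{align*}
f(x_{s+1}) - f(x_s) \le \nabla f(x_s)\tr(x_{s+1}-x_s) + \tfrac{\beta}{2}\|x_{s+1}-x_s\|^2 = -\eta\|g_s\| + \tfrac{\beta}{2}\eta^2 .
\end{align*}
Summing over the window yields $f(x_T) - f(x_0) \le -\eta\sum_{s<T}\|g_s\| + \tO(\eta^{3/2})$, so it suffices to prove that the gradient norms accumulate to $\sum_{s<T}\|g_s\| = \tOmega(1)$, equivalently that $\|g_s\|$ reaches order $\sqrt\eta$ for a constant fraction of the window.

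The gradient norm is built up by the escape coordinate. Writing $g_t = g_0 + H(x_t - x_0) + O(\rho\|x_t-x_0\|^2)$, the $e_1$-component of $-g_t/\|g_t\|$ equals $(\gamma_0\alpha_t - (g_0)_1 + O(\rho\eta))/\|g_t\|$, so
\begin{align*}
\alpha_{t+1} = \alpha_t + \eta\,\frac{\gamma_0\alpha_t - (g_0)_1}{\|g_t\|} + \tO(\rho\,\eta^{3/2}) .
\end{align*}
This recursion has an unstable fixed point at $\alpha = (g_0)_1/\gamma_0$ from which $\alpha_t$ is repelled; upper bounding $\|g_t\| \le \|g_0\| + \beta\|x_t - x_0\| = \tO(\sqrt\eta)$ throughout, once $|\alpha_t|$ exceeds a small multiple of $\sqrt\eta$ the fraction $(\gamma_0\alpha_t-(g_0)_1)/\|g_t\|$ is $\Omega(1)$ and $|\alpha_t|$ grows by $\Omega(\eta)$ per step, reaching $\tOmega(\sqrt\eta)$ within $\tO(\eta^{-1/2})$ steps. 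Then $\|g_t\| \ge |(g_t)_1| \ge \gamma_0|\alpha_t| - \|g_0\| = \tOmega(\sqrt\eta)$, which feeds back into the descent sum and delivers $f(x_t) - f(x_0) \le -\tOmega(\eta)$. The contracting (positive-curvature) directions need only be controlled enough to preserve $\|g_t\| = \tO(\sqrt\eta)$ and not to sabotage the escape; there the normalized step points inward, so those coordinates and their gradient contributions stay $\tO(\sqrt\eta)$.

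The remaining ingredient is probabilistic: the recursion takes off only once $\alpha_0$ (the $e_1$-component of $\theta n_0$) separates from the unstable fixed point by a threshold of order $\eta$, and a single Gaussian coordinate clears this only with constant probability. To boost this to confidence $1-\xi$ I would invoke the coupling / stuck-region argument of \cite{ge}: consider two runs whose initial perturbations agree off $e_1$ but differ along it, and show that the recursion above makes their escape coordinates separate, so at most one can remain stuck. This confines the set of perturbations from which escape fails to a slab of polynomially small width transverse to $e_1$, which the isotropic Gaussian of scale $\theta = \tilde\Theta(\eta)$ lands inside with probability at most $\xi$.

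The step I expect to be hardest is the simultaneous control of the normalization $\|g_t\|$ and the escape growth: unlike vanilla GD, where the unstable coordinate grows geometrically independently of the rest, NGD couples every direction through the denominator $\|g_t\|$, so I must show the escape coordinate comes to dominate the gradient before the cubic Taylor error (governed by $\rho$) or drift in the contracting directions erodes the $\Omega(1)$ lower bound on the normalized step. Closing the probabilistic gap is equally delicate: the stuck-region width must be made quantitative enough to beat $\xi$ while $\theta$ is kept small enough that the injected noise does not itself raise $f$ by more than $\tO(\eta)$, which is what pins down the stated choices $\theta = \tilde\Theta(\eta)$ and $N_0 = \tilde\Theta(\eta^{-1/2})$.
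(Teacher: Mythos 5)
Your proposal is sound in outline but converts gradient growth into function decrease by a genuinely different route than the paper. You telescope the smoothness bound $f(x_{s+1})-f(x_s)\le -\eta\|g_s\|+\tfrac{\beta}{2}\eta^2$ and lower-bound $\sum_s\|g_s\|$; the paper instead never sums per-step decreases. It tracks the coordinates $\alpha_t^{(i)}$ of $x_t$ in the eigenbasis of $H_0$, proves via Lemmas~\ref{lem:PosEigLemmaGeneral} and~\ref{lem:NegEigLemmaGeneral} (and Corollaries~\ref{cor:LambPositive_General},~\ref{cor:LambNeg_General}) that the positive-curvature coordinates contribute at most $d\,\tO(\eta^{3/2})$ to the quadratic model $\tf(x)=\tfrac12 x^\top H_0 x$, and then argues that since $\|\tg_t\|^2=\sum_i\lambda_i^2(\alpha_t^{(i)})^2$ has grown from $\le\beta^2\eta$ to $\ge 3\beta^2\eta$, the growth must live in the negative-curvature coordinates, which forces $\tf(x_t)-\tf(x_0)\le -\Omega(\eta)$ (Lemma~\ref{lem:ImproveLem_General}). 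Your telescoping shortcut is more elementary and bypasses the positive-eigenvalue bookkeeping entirely, but note that the bound $\sum_{s<t^*}\|g_s\|=\Omega(1)$ does not follow from your stated mechanism ("$|\alpha_t|$ grows by $\Omega(\eta)$ per step once it exceeds a small multiple of $\sqrt\eta$" — at that point you are already essentially done); it follows from the geometric amplification $|g_{t+1}^{(1)}|\ge|g_t^{(1)}|(1+\gamma\sqrt{\eta}/4\beta)$ that the paper establishes in Lemma~\ref{lem:Saddle_normGrows_General}, via the geometric-series identity $\sum_{s\le t^*}|g_s^{(1)}|\gtrsim |g_{t^*}^{(1)}|\cdot\frac{4\beta}{\gamma\sqrt{\eta}}=\Omega(1)$. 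The core escape mechanism you describe (repulsion from the unstable fixed point of the normalized recursion, with $\|g_t\|\le 2\beta\sqrt\eta$ keeping the amplification factor at $1+\Omega(\sqrt\eta)$) is exactly the paper's Equation~\eqref{eq:GradEqNGD2_Comp}. Where you over-engineer is the probabilistic step: the paper does not need a coupling/stuck-region argument. It simply observes that the injected Gaussian gives $|g_1^{(1)}|=\tOmega(\eta)$ with probability at least $1/3$, and boosts to $1-\xi$ by repeating over $\log(1/\xi)$ noise epochs; your two-run coupling would work but imports machinery the simpler union-over-epochs argument makes unnecessary.
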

Theorem~\ref{theorem:MAinPaper} is based on the above three lemmas. The full proof of the Theorem appears in Appendix~\ref{Proof_theorem:MAinPaper}.  Next we provide a short sketch of the proof.
\vspace{-7pt}
\begin{proof}[Proof Sketch of Theorem~\ref{theorem:MAinPaper}]
Loosely speaking, Lemmas~\ref{lem:LargeGrads},~\ref{lem:SaddleGeneralMain} imply that as long as we have not reached close to a local minimum, then our per-round improvement is   $\tOmega(\eta^{3/2})$  (on average). Since $f$ is bounded, this implies that within $\tO(\eta^{-3/2})$ rounds we reach at some local minimum, $x^*$. Lemma~\ref{lem:LocalOptFull} ensures that we will remain in the $\tO(\sqrt{\eta})$ proximity of this local minimum.  Finally, since $f$ is $\beta$-smooth, this $\tO(\sqrt{\eta})$ proximity  implies that we reach a point which is $\tO(\eta)$ close by value to $f(x^*)$.
\end{proof}

\vspace{-18pt}
\section{Analysis}
\vspace{-7pt}
Here we prove the main statements regrading the three scenarios defined by the strict-saddle property. Section~\ref{sec:largeGrads} analyses the scenario of large gradients (Lemma~\ref{lem:LargeGrads}), Section~\ref{sec:LocalMin} analyses the local-minimum scenario (Lemma~\ref{lem:LocalOptFull}), and Section~\ref{sec:SaddleMetaSection} analyses the case of saddle points (Lemma~\ref{lem:SaddleGeneralMain}). For brevity we do not always provide full proofs, which are deferred to appendix. 
\vspace{-7pt}
\subsection{Large Gradients}\label{sec:largeGrads}
\vspace{-7pt}
\begin{proof}[Proof of Lemma~\ref{lem:LargeGrads}]
We will prove assuming a noisy update, i.e.  $x_{1}=x_0-\eta \hatg_0+\theta n_0$ and $n_0\sim \N(0,I_d)$ (the noiseless update case is similar). By the update rule:
\begin{align*}
\E[f(x_{1})-f(x_0)\mid x_0] 
&\lee \E [ g_0^\top(x_{1}- x_{0})+\frac{\beta}{2}\| x_{1}-x_{0}\|^2 \mid x_0 ] \\
& \eq \E[ g_0^\top (-\eta \hatg_0+\theta n_0) + \frac{\beta}{2} \|- \eta \hatg_0+\theta n_0 \|^2 \mid x_0 ] \\
&\eq -\eta \|g_0 \|+\frac{\beta}{2}(\eta^2+d\theta^2)\\
&\lee -\beta \eta^{3/2}  +\frac{\beta}{2}(\eta^2+d\theta^2)\\   
&\lee -\beta \eta^{3/2}/2 \\ 
\end{align*}
here we used $\E[n_0]=0$, $\E \| n_0\|^2=d$, the smoothness of $f$, $\|\hatg_0\|=1$, and the last inequality uses $d\theta^2=d\tilde{\Theta}(\eta^2)< \eta^{3/2}/4$, which holds if we choose $\eta= \tO(1/d^2)$.
In order for the other scenarios to hold when the gradient is small we  require $\beta \sqrt{\eta} \leq \nu$.
\end{proof}

\subsection{Local Minimum} \label{sec:LocalMin}
For brevity we will not prove Lemma~\ref{lem:LocalOptFull}, but rather state and prove a simpler lemma assuming all updates are noiseless; the proof of  Lemma~\ref{lem:LocalOptFull} regarding the general case appears in Appendix~\ref{sec:Omitted Proof Local Minimum}.
\begin{lemma}\label{lem:LocalOpt}
Suppose that $x_0$ is close to a local minimum $x^*$, i.e.,  $ \|x_0 -x^*\| \leq  r $,  and $\|g_0\|\leq \beta \sqrt{\eta} \leq \nu$. Then the following holds for any $t\geq 0$:
\begin{align*}
\|x_{t}-x^*\|^2 \lee \max\{ \|x_0-x^*\|^2,\frac{2\beta^2}{\alpha^2}\eta^2\}~.
\end{align*}
\end{lemma}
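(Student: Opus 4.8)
The plan is to prove the bound by induction on $t$, tracking the squared distance $\Delta_t \eqdef \|x_t-x^*\|^2$ and showing it never exceeds $M \eqdef \max\{\|x_0-x^*\|^2,\frac{2\beta^2}{\alpha^2}\eta^2\}$. The base case $\Delta_0\le M$ is immediate. For the inductive step I would expand the noiseless update $x_{t+1}=x_t-\eta\hatg_t$ to obtain $\Delta_{t+1}=\Delta_t-2\eta\,\hatg_t\tr(x_t-x^*)+\eta^2$, using $\|\hatg_t\|=1$.

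The two structural facts I would invoke come from the local-minimum scenario of the strict-saddle property. First, since $x_t$ stays in the $2r$-neighborhood where $f$ is $\alpha$-strongly-convex (justified below) and $\nabla f(x^*)=0$, strong convexity applied with $x=x_t,\,y=x^*$ gives $g_t\tr(x_t-x^*)\ge f(x_t)-f(x^*)+\frac{\alpha}{2}\Delta_t\ge \frac{\alpha}{2}\Delta_t$, the last step using $f(x_t)\ge f(x^*)$. Second, $\beta$-smoothness together with $\nabla f(x^*)=0$ yields $\|g_t\|=\|\nabla f(x_t)-\nabla f(x^*)\|\le \beta\sqrt{\Delta_t}$. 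Dividing, $\hatg_t\tr(x_t-x^*)=g_t\tr(x_t-x^*)/\|g_t\|\ge \frac{\alpha}{2\beta}\sqrt{\Delta_t}$, so the recursion becomes $\Delta_{t+1}\le \Delta_t-\frac{\alpha\eta}{\beta}\sqrt{\Delta_t}+\eta^2$.

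I would then close the induction with a two-case analysis on the size of $\sqrt{\Delta_t}$ relative to the threshold $\beta\eta/\alpha$. When $\sqrt{\Delta_t}\ge \beta\eta/\alpha$, the linear term dominates the quadratic, $-\frac{\alpha\eta}{\beta}\sqrt{\Delta_t}+\eta^2\le 0$, and hence $\Delta_{t+1}\le \Delta_t\le M$. When $\sqrt{\Delta_t}<\beta\eta/\alpha$, I drop the negative term to get $\Delta_{t+1}<\frac{\beta^2}{\alpha^2}\eta^2+\eta^2$; the crucial observation is that $\alpha\le\beta$ (which always holds for an $\alpha$-strongly-convex, $\beta$-smooth function, since the Hessian eigenvalues in the neighborhood lie in $[\alpha,\beta]$), so $\eta^2\le \frac{\beta^2}{\alpha^2}\eta^2$ and therefore $\Delta_{t+1}\le \frac{2\beta^2}{\alpha^2}\eta^2\le M$. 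Either way $\Delta_{t+1}\le M$, completing the induction.

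The main obstacle, and the step needing the most care, is justifying that the strong-convexity and smoothness inequalities are legitimately applicable at every $x_t$, i.e. that the iterates never leave the $2r$-ball around $x^*$ on which scenario 3 guarantees strong convexity. This is where a small-learning-rate assumption enters: the induction gives $\Delta_t\le M\le\max\{r^2,\frac{2\beta^2}{\alpha^2}\eta^2\}$, so choosing $\eta$ small enough that $\frac{2\beta^2}{\alpha^2}\eta^2\le 4r^2$ keeps $\|x_t-x^*\|\le 2r$ throughout and makes the induction self-consistent. A minor edge case is $g_t=0$, which in the strongly-convex region forces $x_t=x^*$, so $\Delta_t=0\le M$ and the bound holds trivially.
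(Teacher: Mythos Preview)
Your proof is correct and follows essentially the same route as the paper's: induction on $t$, expansion of $\|x_{t+1}-x^*\|^2$, the pair of bounds $g_t\tr(x_t-x^*)\ge c\,\alpha\|x_t-x^*\|^2$ and $\|g_t\|\le\beta\|x_t-x^*\|$, and the same two-case split at the threshold $\sqrt{\Delta_t}\sim\beta\eta/\alpha$. The only cosmetic difference is that the paper invokes a separate lemma to get the sharper constant $c=1$ (rather than your $c=\tfrac12$ from the strong-convexity definition), which is immaterial for the conclusion.
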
 
\begin{proof}
Due to the local strong-strong convexity of $f$ around $x^*$,
we know that $ \|x_0 -x^*\| \leq \frac{1}{\alpha}\|g_0\| \leq \frac{\beta}{\alpha} \sqrt{\eta}$. In order to be consistent with the definition of strict-saddle property we choose $\eta$ such that
$ \frac{\beta}{\alpha} \sqrt{\eta} \leq r$.

The proof requires the following lemma regarding strongly-convex functions:
\begin{lemma}\label{lem:StrCvxity}
Let $F:\reals^d\mapsto \reals$ be an $\alpha$-strongly convex function,  let $x^*=\argmin_{x\in \reals^d} F(x)$ then the following holds for any $x\in \reals^d$:
\begin{align*}
 \nabla F(x)^\top(x-x^*)\geq  \alpha \| x-x^*\|^2~.
\end{align*}
\end{lemma}

We are now ready to prove  Lemma~\ref{lem:LocalOpt} by induction, assuming all updates are noiseless, i.e.
$x_{t+1} = x_{t}-\eta \hatg_{t}$.
Note that the case $t=0$ naturally holds, next we discuss the case where $t\geq 1$.
First assume that  $\|x_{t}-x^*\|\geq \frac{\beta}{\alpha}\eta$, the noiseless Saddle-NGD update rule implies:
\begin{align*}
\|x_{t+1}-x^*\|^2 
&\eq \|x_{t} -x^* \|^2 -2\eta \hat{g}_t^\top (x_{t}-x^*) + \eta^2 \\
& \eq  \|x_{t} -x^* \|^2 -2\eta \frac{1}{\|g_t\|} {g}_t^\top (x_{t}-x^*) + \eta^2\\
&\lee  \|x_{t} -x^* \|^2 -2\eta \frac{1}{\|g_t\|} \alpha \|x_{t}-x^*\|^2 + \eta^2 \\
&\lee \|x_{t} -x^* \|^2 -2\eta\frac{\alpha}{\beta}  \|x_{t}-x^*\| + \eta^2 \\
&\lee \|x_{t} -x^* \|^2   \\
&\lee \max\{ \|x_0-x^*\|^2,\frac{2\beta^2}{\alpha^2}\eta^2\}~,
\end{align*}
here the first inequality  uses Lemma~\ref{lem:StrCvxity},  the second inequality uses $\|g_t\|\leq \beta \|x_t-x^* \|$ which follows from smoothness, and the third inequality uses $\|x_{t}-x^*\|\geq \frac{\beta}{\alpha}\eta$.

For the case where $\|x_{t}-x^*\|\leq \frac{\beta}{\alpha}\eta$ similarly to the above  we can conclude that:
\begin{align*}
\|&x_{t}-x^*\|^2 \\
&\eq \|x_{t} -x^* \|^2 -2\eta \hat{g}_t^\top (x_{t}-x^*) + \eta^2 \\
&\lee \|x_{t} -x^* \|^2  + \eta^2\\
&\lee \frac{2\beta^2}{\alpha^2}\eta^2\\
&\lee \max\{ \|x_0-x^*\|^2,\frac{2\beta^2}{\alpha^2}\eta^2\} ~,
\end{align*}
we use $\hat{g}_t^\top (x_{t}-x^*)\geq 0$, which follows by the local strong-convexity around $x^*$.
\end{proof}
\subsection{Saddle Points}~\label{sec:SaddleMetaSection}
\vspace{-35pt}
\subsubsection*{Intuition and Proof Sketch}
We first provide some intuition regarding the benefits of using NGD rather than GD in escaping saddles. Later we present a short proof sketch of Lemma 
~\ref{lem:SaddleGeneralMain}.
\paragraph{Intuition: NGD vs. GD for a Pure Saddle}
Lemma~\ref{lem:SaddleGeneralMain} states the decrease in function values attained by the Saddle-NGD  near  saddle points.
Intuitively, Saddle-NGD implicitly performs an approximate  power method over the Hessian matrix $H_0: = \nabla^2 f(x_0)$. Since the gradients near saddle points tend to be  small  the use of normalized gradients rather than  the gradients themselves   yields a faster improvement.\\
Consider the minimization of  a pure saddle function: $F(x_1,x_2) = x_1^2 - x_2^2$.
As can be seen in Figure~\ref{fig:GD_pure}, the gradients are almost vanishing around the saddle point $(0,0)$.  Conversely, the normalized gradients (Figure~\ref{fig:NGD_pure}) are of  a constant magnitude.
This intuitively suggets that using NGD instead of (noisy) GD yields a faster escape of the saddle. Figure~\ref{fig:NGDvsGD_pure}  compares between NGD and (noisy) GD for the pure saddle function; both methods are initialized in the proximity of $(0,0)$, and employ a learning rate of $\eta = 0.01$. As  expected, NGD attains a much faster initial improvement. 
Later, when the gradients are sufficiently large, GD prevails. Since our goal is the  optimization of  a general family of functions where saddles behave like pure saddles only locally, we are mostly concerned about the initial  local improvement in the case of a pure saddles. This renders NGD  more appropriate than GD for our goal.
\begin{figure*}[t]
\centering
\subfigure[]{ \label{fig:GD_pure}
\includegraphics[trim = 30mm 75mm 27mm 65mm, clip,
width=0.3\textwidth ]{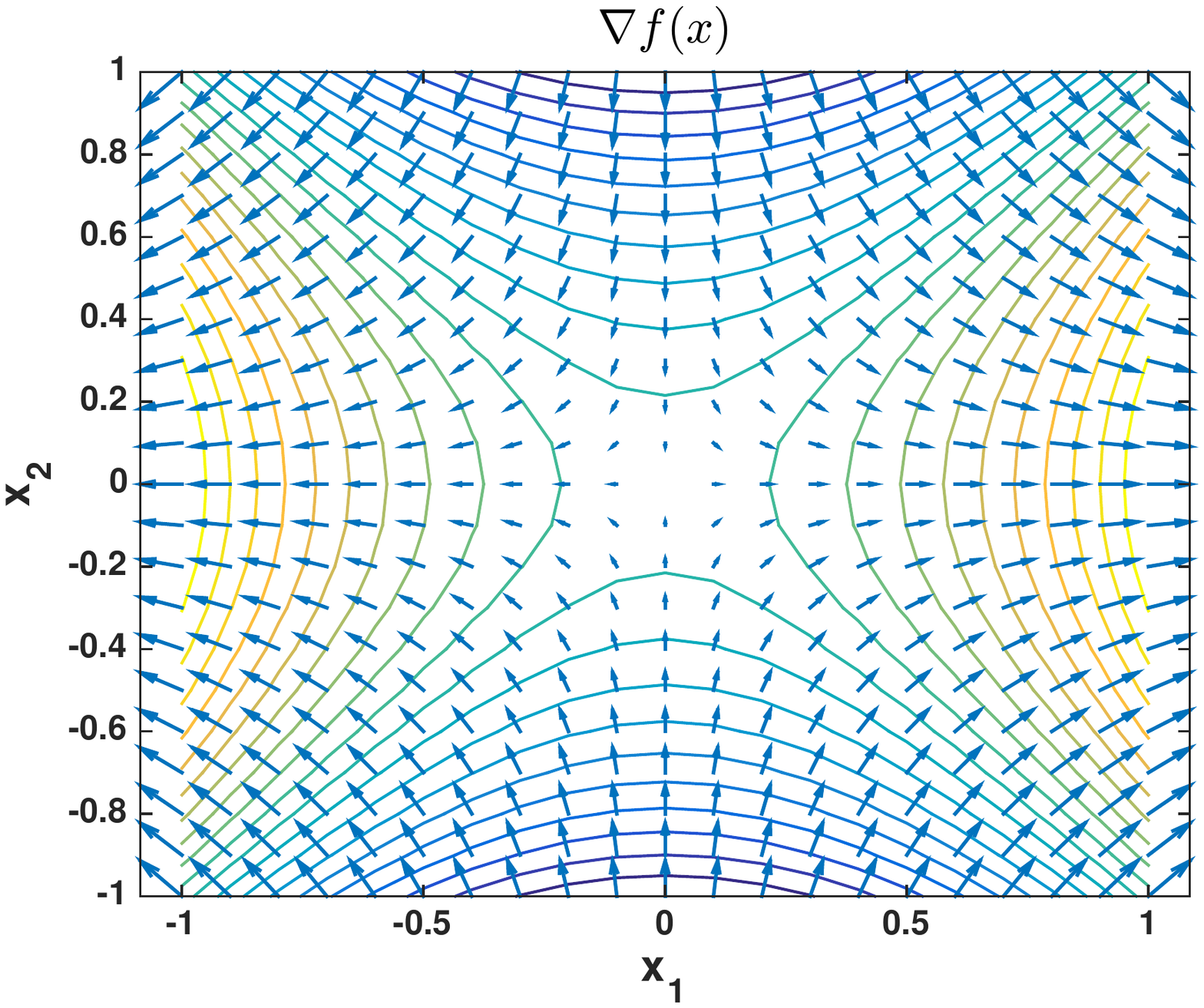}}
\subfigure[]{\label{fig:NGD_pure}
 \includegraphics[trim = 30mm 75mm 27mm 65mm, clip,
width=0.3\textwidth ]{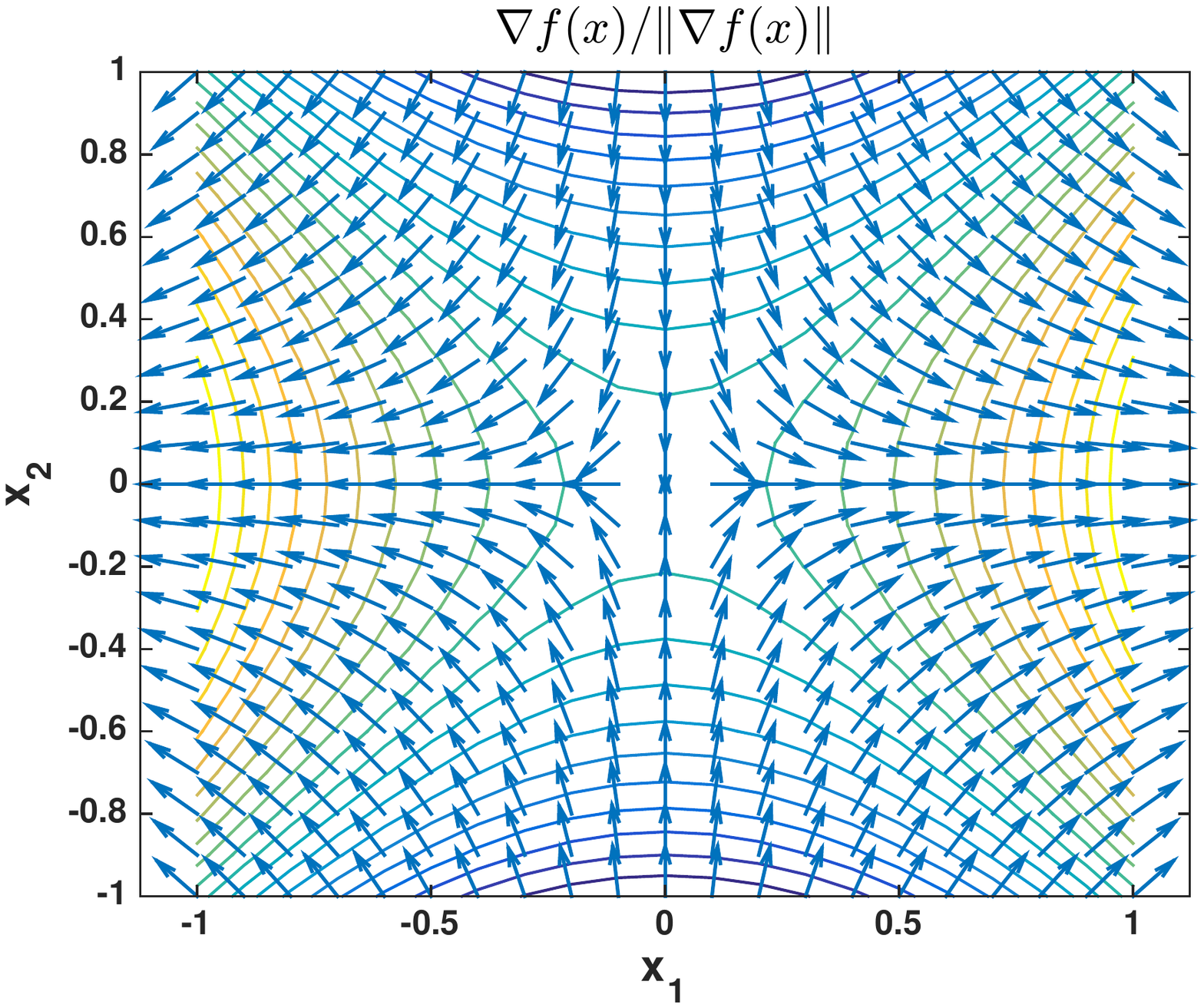}}
 \subfigure[]{ \label{fig:NGDvsGD_pure}
\includegraphics[trim = 10mm 65mm 21mm 73mm, clip, width=0.3\textwidth ]{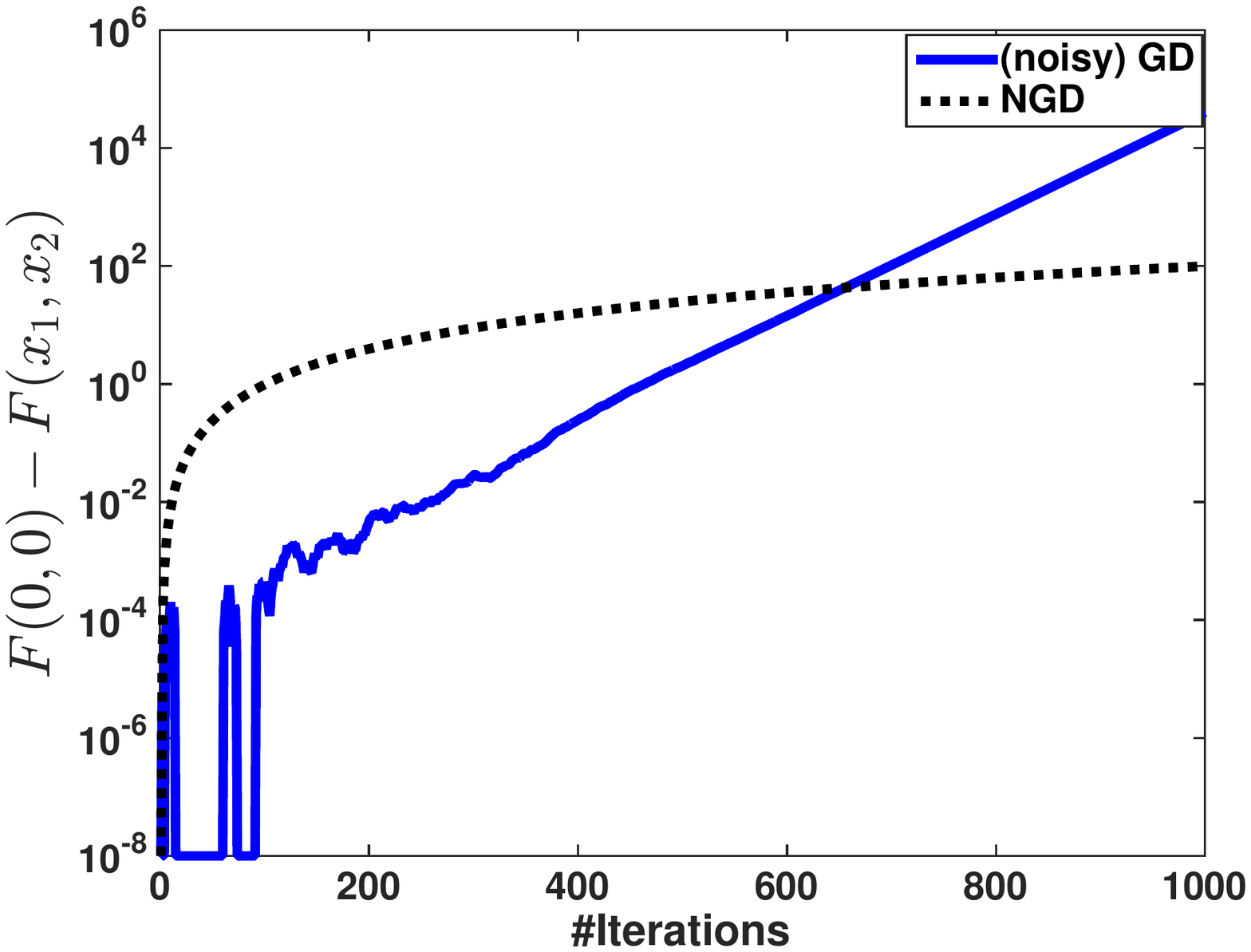}}
\caption{GD vs. NGD around a pure saddle. Left: gradients. Middle: normalized gradients. On the Right we compare GD against NGD, we present  $F(0,0)-F(x_1,x_2)$ vs. $\#$iterations. } 
\end{figure*}

\begin{proof}[Proof sketch of Lemma~~\ref{lem:SaddleGeneralMain}]  
Let $x_0$ be a point such that $\| \nabla f(x_0)\| \leq \beta\sqrt{\eta}$, and also let 
$\lambda_{\min}(\nabla^2 f(x_0))\leq -\gamma$.
Letting  $H_0 = \nabla^2 f(x_0)$, it can be shown that the 
 Saddle-NGD update rule implies:
\begin{align}\label{eq:PowMethodExplain}
\nabla f(x_t) 
\eq
\lr{I- \frac{\eta H_0}{\|\nabla f(x_{t-1}) \|}} \nabla f(x_{t-1}) +\mu_t- \theta H_0 n_t
~,
\end{align}
where $\mu_t \leq \tO(\eta^{3/2})$ for any $t \leq \tO(\eta^{-1/2})$.
Note that the above suggests that Saddle-NGD implicitly performs an approximate power method over the gradients of $f$, with respect to the matrix
$A_t:= \lr{I- \frac{\eta H_0}{\|\nabla f(x_{t-1}) \|}}$.\\
There are two differences compared to the traditional power method: first, the multiplying matrix changes from one round to another depending on 
$\|\nabla f(x_{t-1})\|$, which is a consequence of the normalization. Second, there is an additional additive term $\mu_t$, which is related to the deviation of the objective (and its gradients) from its second order taylor approximation around $x_0$.

When  the gradients are small, the dependence on $\|\nabla f(x_{t-1})\|$   amplifies the increase (resp. decrease) in the magnitude of the components related the negative (resp. positive) eigenvalues of $H_0$. Concretely, it can be shown that whenever $\|\nabla f(x_{t-1})\|\leq  \O(\sqrt{\eta})$, then 
the gradient component which is related to the eigenvector with the most negative eigenvalue of $H_0$, blows up by a factor of $1+\O(\sqrt{\eta})$ at every round
(ignoring $\mu_t$ and the noise term for now). This blow up factor means  that within $\tO(\eta^{-1/2})$ rounds this component increases beyond 
some threshold value ($\geq 2\beta \sqrt{\eta}$).

Intuitively, the increase in the magnitude of components related to the \emph{negative} eigenvalues decreases the objective's value.
 The proof goes on by showing that the
components of gradients/query-points related to the \emph{positive} eigenvectors of $H_0$, do not increase by much. This in turn allows to show that 
the objective's value decreases by $\tOmega(\eta)$ within $\tO(\eta^{-1/2})$ iterations.

The noise injections  utilized by   Saddle-NGD ensure that the additive term, $\mu_t$, does not interfere with the increase of the components 
related to the negative eigenvectors. Since $\mu_t$ is bounded, a careful choice of the  noise magnitude ensures that there is a sufficiently  large initial component in these directions.
\end{proof}

\subsubsection*{Analysis}

Before proving Lemma~\ref{lem:SaddleGeneralMain} we  introduce some notation and establish several lemmas regarding the dynamics of the Saddle-NGD update rule near a saddle point.

\paragraph{Quadratic approximation:}
Let $x_0$ be a point such that $\nabla f(x_0)\leq \beta\sqrt{\eta}$, and also 
$\lambda_{\min}(\nabla^2 f(x_0))\leq -\gamma$   (w.l.o.g. we assume $\lambda_{min}(\nabla^2 f(x_0))= -\gamma$).
Denote by $\tf(x)$ the quadratic approximation of $f$ around $x_0$, i.e.:
\begin{align}\label{eq:QuadApprox1}
\tf(x) \eq f(x_0)+ g_0^\top(x-x_0) + \frac{1}{2}(x-x_0)^\top H_0 (x-x_0)~.
\end{align}
here $g_0 = \nabla f(x_0)$,  $H_0 = \nabla^2 f(x_0)$.
For simplicity we assume that $H_0$ is full rank
\footnote{The analysis for the case $\text{rank}(H_0)<d$ is similar.
Moreover, we can always add $\tf$ an infinitesimally small random perturbation 
$x^\top H_\Delta x $ such that $rank(H_0 + H_\Delta) =d$, w.p.$1$, and  the magnitude  of 
perturbation is arbitrarily small for any $x$ relevant for the analysis}. 
This implies that  there exist
 $\bar{x}_0, C$, such that: $\tf(x) = C + \frac{1}{2}(x-\bar{x}_0)^\top H_0 (x-\bar{x}_0)$.
 Without loss of generality we assume  $\bar{x}_0=0, C=0$. Thus the quadratic approximation is of the following form:
\begin{align}\label{eq:QuadApprox2}
\tf(x) \eq  \frac{1}{2}x^\top H_0 x~.
\end{align}
Along this section, we will interchangeably use Equations~\eqref{eq:QuadApprox1} and \eqref{eq:QuadApprox2} for $\tf$.

Let $\{ e_i\}_{i=1}^d$ be an orthonormal eigen-basis of $H_0$ with eigenvalues
 $\{\lambda_i\}_{i=1}^d$:
\begin{align*}
H_0 \eq \sum_i \lambda_i e_i e_i^\top~.
\end{align*}
Also assume without loss of generality that $e_1$ is the direction with the most negative eigenvector, i.e. $\lambda_1 =  -\gamma$.
We  represent each point  in the eigen-basis of $H_0$, i.e., 
$$x_t \eq \sum_i \alphat^{(i)} e_i~,$$
and therefore 
$$\nabla \tf(x_t) \eq H_0 x_t \eq \sum_i \lambda_i \alphat^{(i)} e_i~.$$

Denoting $g_t^{(i)} = e_i^\top g_t,\;  n_t^{(i)} = e_i^\top n_t$; the NGD update rule:  $x_{t+1} = x_{t} - \eta \frac{g_t}{\|g_t \|}+\theta n_t$, translates coordinate-wise as follows:
\begin{align*}
\alphatp^{(i)} 
\eq
 \alphat^{(i)} -\eta \frac{g_t^{(i)}}{\| g_t\|}+\theta n_t^{(i)}~.
\end{align*}

\paragraph{Part 0:} Here we bound the difference between the gradients of the original function $f$ and the gradients of the quadratic approximation $\tf$, and show that these are bounded by the square norm of the distance between $x$ and $x_0$. This bound will be useful during the proofs since we decompose the the $x_t$'s and their gradients according to the Hessian at $x_0$.

For any $x\in \reals^d$, the gradient, $\nabla f(x)$, can be expressed as follows:
$$\nabla f(x) \eq \nabla f(x_0) +\int_{s=0}^1 H(x_0 +s(x-x_0)) ds(x-x_0) ~.$$
The above enables to relate the gradient of the original function to the gradient of the approximation:
\begin{align}\label{eq:GradError}
\|&\nabla f(x) - \nabla \tf(x)\| \nonumber \\
&\eq \| \nabla f(x) -\left(\nabla f(x_0) +H(x_0)(x-x_0)\right)\|\nonumber \\
&\eq \left\|\int_{s=0}^1\left[H(x_0 +s(x-x_0))-H(x_0)\right] ds(x-x_0)\right\| \nonumber\\
&\lee \int_{s=0}^1\rho s ds \| x-x_0\|^2 \nonumber\\
&\eq \frac{\rho}{2}\| x-x_0\|^2~,
\end{align}
the above uses the $\rho$-Lipschitzness of the Hessian.

\paragraph{Part 1:} Here we show that the value of the objective does not rise by more than $\tO(\eta^{3/2})$ in the first $\tO(\eta^{-1/2})$ rounds.
First note the following two lemmas showing that the magnitude of $x_t$'s components in directions with positive (resp. negative) eigenvalues do not increase (resp. decrease) by much.
\begin{lemma}\label{lem:PosEigLemmaGeneral}
Let $T = \tO(\eta^{-1/2})$. If $\lambda_i \geq 0$ then for any $t\in[T]$:
\begin{align*}\label{eq:InductCases}
|\alphat^{(i)}| 
\lee 
&\max\{\eta, |\alphaz^{(i)} | \}+\tO(\eta)\\
+&
\begin{cases}
 \eta &\quad \text{if $\lambda_i |\alphaz^{(i)}| \geq \frac{\rho(\eta T)^2}{2}$ } \\ 
 \eta\min\{1+\frac{\rho\eta T^2}{2 \lambda_i},T \}	&\quad \text{otherwise }
\end{cases}
\end{align*} 
\end{lemma}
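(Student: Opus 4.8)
The plan is to analyze a single coordinate $i$ with $\lambda_i \gee 0$ in isolation, since the only coupling to the other coordinates enters through the scalars $\|x_t\|$ and $\|g_t\|$, both of which I can control crudely. Writing $a_t \eqdef \alpha_t^{(i)}$ and decomposing the true gradient via \eqref{eq:GradError} as $g_t = \nabla\tf(x_t)+\eps_t$ with $\eps_t \eqdef \nabla f(x_t)-\nabla\tf(x_t)$, the coordinate update reads
$$a_{t+1} \eq a_t - \eta\frac{\lambda_i a_t + \eps_t^{(i)}}{\|g_t\|} + \theta n_t^{(i)}, \qquad \eps_t^{(i)}\eqdef e_i^\top \eps_t .$$
First I would record two elementary facts. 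Since $x_0=0$ and each step moves $x$ by at most $\eta+\theta\|n_t\|$, a union bound over the $\tO(1)$ noise injections in $T=\tO(\eta^{-1/2})$ rounds gives $\|x_t\|\lee \eta T + \tO(\eta)$, so by \eqref{eq:GradError} the error is uniformly bounded, $\|\eps_t\|\lee E \eqdef \tfrac{\rho(\eta T)^2}{2}+\tO(\eta^{3/2})$; the threshold $\tfrac{\rho(\eta T)^2}{2}$ in the statement is exactly this leading error term. Second, since $|g_t^{(i)}|/\|g_t\|\lee 1$, every step obeys $|a_{t+1}|\lee |a_t|+\eta+\theta|n_t^{(i)}|$, which already yields the crude bound $|a_t|\lee |a_0|+\eta T+\tO(\eta)$ supplying the $\eta\min\{\cdot,T\}$ cap.

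The crux is a contraction-versus-drift dichotomy that tames the normalization denominator $\|g_t\|$. I would prove by induction the single bound $|a_t|\lee \max\{|a_0|,\eta+E/\lambda_i\}+\eta+\Theta_t$, where $\Theta_t \eqdef \theta\sum_{s<t}|n_s^{(i)}|$, splitting the step on whether $a_t$ exceeds the ``steady-state'' level $\eta+E/\lambda_i$. If $|a_t|\gee \eta+E/\lambda_i$ then $\lambda_i|a_t|>|\eps_t^{(i)}|$, so $g_t^{(i)}$ has the sign of $a_t$ and $\|g_t\|\gee |g_t^{(i)}|\gee \lambda_i|a_t|-E\gee \eta\lambda_i$; hence the drift term $-\eta g_t^{(i)}/\|g_t\|$ points toward zero and $|a_{t+1}|\lee |a_t|+\theta|n_t^{(i)}|$. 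If instead $|a_t|<\eta+E/\lambda_i$, the crude per-step bound gives $|a_{t+1}|< (\eta+E/\lambda_i)+\eta+\theta|n_t^{(i)}|$. In either case $|a_{t+1}|\lee \max\{|a_0|,\eta+E/\lambda_i\}+\eta+\Theta_{t+1}$, closing the induction. Note this argument for coordinate $i$ is self-contained, as it invokes only the global movement bound and $\|g_t\|\gee|g_t^{(i)}|$, with no dependence on the refined bounds of other coordinates.

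It remains to simplify. Bounding the Gaussian coordinates $|n_s^{(i)}|$ by $\tO(1)$ over the $\tO(1)$ injections gives $\Theta_t=\tO(\eta)$ with probability $\gee 1-\xi$ (the $\log(1/\xi)$ folded into $\tO$). Using $\max\{A,B+C\}\lee \max\{A,B\}+C$ together with $E/\lambda_i = \tfrac{\rho(\eta T)^2}{2\lambda_i}+\tO(\eta^{3/2})$ and the identity $\eta\cdot\tfrac{\rho\eta T^2}{2\lambda_i}=\tfrac{\rho(\eta T)^2}{2\lambda_i}$, one obtains $|a_t|\lee \max\{\eta,|a_0|\}+\tO(\eta)+\eta\big(1+\tfrac{\rho\eta T^2}{2\lambda_i}\big)$; taking the minimum with the crude $\eta T$ bound yields the stated $\eta\min\{1+\tfrac{\rho\eta T^2}{2\lambda_i},T\}$ term. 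In the branch $\lambda_i|\alpha_0^{(i)}|\gee \tfrac{\rho(\eta T)^2}{2}$ we have $E/\lambda_i\lee |a_0|$, so the $\max$ collapses to $\max\{\eta,|a_0|\}+\eta$, recovering the sharper first case. The main obstacle is precisely the denominator $\|g_t\|$: normalization makes the effective step $\eta/\|g_t\|$ potentially large and the recursion non-contractive, and the resolution is that whenever $a_t$ exceeds its steady-state level, coordinate $i$'s own gradient $|g_t^{(i)}|$ is by itself large enough to certify $\|g_t\|\gee \eta\lambda_i$ and hence a genuine contraction.
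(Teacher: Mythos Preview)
Your proposal is correct and follows essentially the same approach as the paper: both arguments rest on the observation that once $|\alpha_t^{(i)}|$ exceeds the threshold $\approx \eta + \tfrac{\rho(\eta T)^2}{2\lambda_i}$, the error term cannot flip the sign of $g_t^{(i)}=\lambda_i\alpha_t^{(i)}+\Delta_t^{(i)}$, so the normalized step (of magnitude $\le\eta$) points toward zero and $|\alpha_{t+1}^{(i)}|\le|\alpha_t^{(i)}|$; below the threshold one simply uses the crude per-step increment $+\eta$, and the whole thing is capped by the trivial $\eta T$ bound. The only cosmetic differences are that the paper strips off the noise contribution $\theta\tfrac{T}{N_0}=\tO(\eta)$ first and then runs a noiseless induction (which slightly simplifies the threshold $E$ to exactly $\tfrac{\rho(\eta T)^2}{2}$ and avoids the $\tO(\eta^{3/2})/\lambda_i$ nuisance you have to absorb via the $\min\{\cdot,T\}$ cap), and that the paper organizes the case split by first fixing which branch of the $\min$ is active.
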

\begin{lemma}\label{lem:NegEigLemmaGeneral}
Let $T\leq \tO(\eta^{-1/2})$. If $\lambda_i \leq 0$ then  there exist $c\in [0,1]$ such that  $\forall t\in[T]$:
\begin{equation*}
|\alphat^{(i)}| 
\gee 
\begin{cases}
 |\alphaz^{(i)}-\tO(\eta)|  &\quad \text{if $\lambda_i |\alphaz^{(i)}| > \frac{\rho(\eta T)^2}{2}$ } \\ 
\left| |\alphaz^{(i)}| -  c\eta T-\tO(\eta)\right|	&\quad \text{otherwise }
\end{cases}
\end{equation*} 
\end{lemma}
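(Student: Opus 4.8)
The plan is to isolate the single coordinate $\alpha_t^{(i)}$ and track its magnitude through the coordinate-wise recursion, exploiting the fact that for $\lambda_i\le 0$ the multiplicative factor acting on $\alpha_t^{(i)}$ is at least $1$, so the component can only be eroded by the Taylor error and the injected noise, both of which I can keep at scale $\tO(\eta)$. First I would split the gradient into its quadratic part and an error: writing $g_t^{(i)}=e_i^\top\nabla f(x_t)=\lambda_i\alpha_t^{(i)}+\delta_t^{(i)}$ with $\delta_t^{(i)}=e_i^\top(\nabla f(x_t)-\nabla\tf(x_t))$, estimate~\eqref{eq:GradError} gives $|\delta_t^{(i)}|\le\frac{\rho}{2}\|x_t-x_0\|^2$. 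Since each update displaces $x_t$ by at most $\eta$ (the normalized step has unit direction) apart from the $\tO(1)$ noise injections that together contribute $\tO(\eta)$, I get $\|x_t-x_0\|\le \eta T+\tO(\eta)$ for $t\le T=\tO(\eta^{-1/2})$, hence $|\delta_t^{(i)}|\le\frac{\rho(\eta T)^2}{2}+\tO(\eta)$.

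Substituting the decomposition into the coordinate recursion gives
\begin{equation*}
\alpha_{t+1}^{(i)} \eq \Bigl(1-\tfrac{\eta\lambda_i}{\|g_t\|}\Bigr)\alpha_t^{(i)} - \tfrac{\eta\delta_t^{(i)}}{\|g_t\|} + \theta n_t^{(i)},
\end{equation*}
and the key observation is that $A_t:=1-\eta\lambda_i/\|g_t\|\ge 1$ because $\lambda_i\le 0$ (I read the product $\lambda_i|\alpha_0^{(i)}|$ in the hypothesis as its magnitude $|\lambda_i|\,|\alpha_0^{(i)}|$). Passing to magnitudes via the reverse triangle inequality, $|\alpha_{t+1}^{(i)}|\ge A_t|\alpha_t^{(i)}| - \frac{\eta|\delta_t^{(i)}|}{\|g_t\|} - \theta|n_t^{(i)}|$.

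Now I would split on the hypothesis. In Case~1, $|\lambda_i|\,|\alpha_0^{(i)}|>\frac{\rho(\eta T)^2}{2}\ge|\delta_t^{(i)}|$ up to $\tO(\eta)$, so I argue by induction that the per-step amplification dominates the error: as long as $|\alpha_t^{(i)}|\ge|\alpha_0^{(i)}|$, the positive gain $\frac{\eta|\lambda_i|}{\|g_t\|}|\alpha_t^{(i)}|$ exceeds the error loss $\frac{\eta|\delta_t^{(i)}|}{\|g_t\|}$ (the denominators cancel), so the magnitude is non-decreasing up to the noise, yielding $|\alpha_t^{(i)}|\ge|\alpha_0^{(i)}|-\tO(\eta)$ once the $\tO(1)$ Gaussian injections of size $\tO(\eta)$ are summed. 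In Case~2 the initial component is too small for this domination, so the magnitude may genuinely shrink; here I bound the total adverse drift. By reverse triangle inequality each step changes $\alpha_t^{(i)}$ by at most $\eta\,|g_t^{(i)}|/\|g_t\|+\theta|n_t^{(i)}|$, and since $|g_t^{(i)}|/\|g_t\|\le 1$, setting $c:=\frac1T\sum_{t<T}|g_t^{(i)}|/\|g_t\|\in[0,1]$ bounds the gradient-driven drift by $c\eta T$, giving $|\alpha_t^{(i)}|\ge |\alpha_0^{(i)}|-c\eta T-\tO(\eta)$.

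The main obstacle is upgrading this last inequality to the outer absolute value $\bigl||\alpha_0^{(i)}|-c\eta T-\tO(\eta)\bigr|$, i.e. handling the regime where the drift pushes the component through zero: there the simple monotone drift bound only certifies a (possibly negative, hence vacuous) lower bound, so I cannot conclude by triangle inequality alone. Instead I would invoke the power-method mechanism—once $\alpha^{(i)}$ is near the origin the amplification factor $A_t\ge 1+\Omega(\gamma\sqrt\eta)$ acts as a repulsion, re-growing the magnitude to at least the claimed value. Making this quantitative requires the near-saddle upper bound $\|g_t\|\le O(\sqrt\eta)$ (so that $\eta|\lambda_i|/\|g_t\|$ is bounded below), uniform control of the round-dependent denominators $\|g_t\|$, and a union bound ensuring the $\tO(1)$ noise contributions stay $\tO(\eta)$ with probability $\ge 1-\xi$; all three follow from the parameter choices $\theta=\tilde\Theta(\eta)$ and $N_0=\tilde\Theta(\eta^{-1/2})$.
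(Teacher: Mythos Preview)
Your Case~1 argument is correct and essentially the paper's: once $|\lambda_i|\,|\alpha_t^{(i)}|>|\delta_t^{(i)}|$, the sign of $g_t^{(i)}$ is opposite to that of $\alpha_t^{(i)}$, so the normalized step only increases the magnitude, and induction goes through with the noise contributing $\tO(\eta)$ in total.

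The ``main obstacle'' you identify in Case~2 is not an obstacle. You read the outer absolute value $\bigl||\alpha_0^{(i)}|-c\eta T-\tO(\eta)\bigr|$ as a stronger claim than your drift bound $|\alpha_t^{(i)}|\ge|\alpha_0^{(i)}|-c\eta T-\tO(\eta)$, but it is not: $c$ is \emph{existentially} quantified. If the component crosses zero so that your drift bound becomes vacuous, simply choose $c$ so that $|\alpha_0^{(i)}|-c\eta T-\tO(\eta)=0$ (this $c$ lies in $[0,1]$ precisely because the total coordinate change over $T$ steps is at most $\eta T+\tO(\eta)$). More generally, set $m:=\min_{t\le T}|\alpha_t^{(i)}|$; since each noiseless step moves $\alpha^{(i)}$ by at most $\eta$, one has $|\alpha_0^{(i)}|-m\le \eta T+\tO(\eta)$, so $c:=\max\{0,(|\alpha_0^{(i)}|-m)/(\eta T)\}\in[0,1]$ makes the right-hand side exactly $m$. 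This is all the paper does: the Case~2 bound is a one-line consequence of the per-step displacement bound, with no need to track $|g_t^{(i)}|/\|g_t\|$ or define $c$ as an average.

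Your proposed repair via power-method repulsion is therefore unnecessary, and in any case it is fragile: it requires $\|g_t\|\le O(\sqrt\eta)$ to get $A_t\ge 1+\Omega(\gamma\sqrt\eta)$, but during the saddle escape the gradient norm is \emph{growing} and eventually exceeds $2\beta\sqrt\eta$ (that is exactly the content of Lemma~\ref{lem:Saddle_normGrows_General}), so this lower bound on $A_t$ is not available uniformly over $t\le T$. Drop this detour; the lemma follows immediately once you recognize the freedom in choosing $c$.
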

The following two corollaries of Lemmas~\ref{lem:PosEigLemmaGeneral},~\ref{lem:NegEigLemmaGeneral} show that the objective's 
value does not rise beyond $\tO(\eta^{3/2})$ within $\tO(\eta^{-1/2})$ rounds.
\begin{corollary}\label{cor:LambPositive_General}
Let $2\leq T=\tO(\eta^{-1/2})$, and assume that $\|g_0 \|\leq \beta\sqrt{\eta}$.
Then if $\lambda_i>0$ and we start in a saddle,   the following holds for all $t\in[T]$:
\begin{align*} 
\lambda_i \left( (\alphat^{(i)})^2 - (\alphaz^{(i)})^2 \right) 
\lee  \tO(\eta^{3/2})~.
\end{align*}
\end{corollary}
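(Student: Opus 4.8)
The plan is to read the bound off Lemma~\ref{lem:PosEigLemmaGeneral} directly and to control the resulting expression using three ambient facts. First, since $g_0 = \nabla\tf(x_0) = H_0 x_0 = \sum_i \lambda_i\alphaz^{(i)} e_i$, the $i$-th gradient coordinate is $g_0^{(i)} = \lambda_i\alphaz^{(i)}$, so the hypothesis $\|g_0\|\leq\beta\sqrt\eta$ yields the crucial bound $\lambda_i|\alphaz^{(i)}|\leq\beta\sqrt\eta$. Second, $\beta$-smoothness gives $0<\lambda_i\leq\beta$. Third, $T=\tO(\eta^{-1/2})$ supplies the scalings $\eta T=\tO(\sqrt\eta)$, $(\eta T)^2=\tO(\eta)$, $\rho\eta T^2=\tO(1)$, and---most importantly---$\rho\eta^3 T^3=\tO(\eta^{3/2})$. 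I would keep every $\beta,\rho$ factor inside $\tO(\cdot)$ throughout.

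Writing $\lambda_i\lr{(\alphat^{(i)})^2-(\alphaz^{(i)})^2}=\lambda_i(|\alphat^{(i)}|-|\alphaz^{(i)}|)(|\alphat^{(i)}|+|\alphaz^{(i)}|)$, I would split along the two cases of Lemma~\ref{lem:PosEigLemmaGeneral}. In the first case, $\lambda_i|\alphaz^{(i)}|\geq\rho(\eta T)^2/2$, the lemma gives $|\alphat^{(i)}|\leq\max\{\eta,|\alphaz^{(i)}|\}+\tO(\eta)$, so the increment $|\alphat^{(i)}|-|\alphaz^{(i)}|$ is at most $\tO(\eta)$; multiplying by the sum and invoking $\lambda_i|\alphaz^{(i)}|\leq\beta\sqrt\eta$ and $\lambda_i\leq\beta$ gives $2\tO(\eta)\,\lambda_i|\alphaz^{(i)}|+\lambda_i\tO(\eta^2)\leq\tO(\eta^{3/2})$ when $|\alphaz^{(i)}|\geq\eta$; when $|\alphaz^{(i)}|<\eta$ one has $|\alphat^{(i)}|=\tO(\eta)$, hence $\lambda_i(\alphat^{(i)})^2=\tO(\eta^2)$ while the subtracted term only helps.

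The substance of the proof is the second case, $\lambda_i|\alphaz^{(i)}|<\rho(\eta T)^2/2=\tO(\eta)$, where $\lambda_i$ may be arbitrarily small and $|\alphaz^{(i)}|$ need not be small. Here the lemma's bound contains the blow-up increment $b:=\tO(\eta)+\eta\min\{1+\tfrac{\rho\eta T^2}{2\lambda_i},T\}$, whose $1/\lambda_i$ dependence is the crux. Expanding $\lambda_i((\alphat^{(i)})^2-(\alphaz^{(i)})^2)\leq 2\lambda_i|\alphaz^{(i)}|\,b+\lambda_i b^2$, the cross term is handled at once by $\lambda_i|\alphaz^{(i)}|=\tO(\eta)$ together with $b\leq\tO(\eta)+\eta T=\tO(\sqrt\eta)$, giving $\tO(\eta^{3/2})$. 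For $\lambda_i b^2$ I would split once more on which argument attains the minimum: when the minimum equals $T$---which, for $T\geq2$, happens exactly when $\lambda_i\lesssim\rho\eta T$---one bounds $\lambda_i\min\{\cdots\}^2\leq\lambda_i T^2\lesssim\rho\eta T^3$, so $\eta^2\cdot\rho\eta T^3=\rho\eta^3 T^3=\tO(\eta^{3/2})$; and when the minimum equals $1+\tfrac{\rho\eta T^2}{2\lambda_i}$, expanding $\lambda_i(1+\tfrac{\rho\eta T^2}{2\lambda_i})^2=\lambda_i+\rho\eta T^2+\tfrac{\rho^2\eta^2 T^4}{4\lambda_i}$ and using the very inequality $\tfrac{1}{\lambda_i}\lesssim\tfrac{1}{\rho\eta T}$ that defines this regime again tames the last term to $\lesssim\rho\eta T^3$, so multiplying by $\eta^2$ yields $\tO(\eta^{3/2})$.

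The main obstacle is precisely this small-$\lambda_i$ regime: there is no way to make $|\alphaz^{(i)}|$ itself small (only the product $\lambda_i|\alphaz^{(i)}|\leq\beta\sqrt\eta$ is controlled), so one cannot argue termwise and must instead rely on the algebraic cancellation of the troublesome $1/\lambda_i$ against the prefactor $\lambda_i$, on the cap at $T$ inside the minimum, and ultimately on the identity $\rho\eta^3 T^3=\tO(\eta^{3/2})$ coming from $T=\tO(\eta^{-1/2})$. Everything else is bookkeeping, and I would present the two subcases of $\lambda_i b^2$ in parallel to make clear that both collapse to $\tO(\eta^{3/2})$ for the same reason.
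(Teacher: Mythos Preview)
Your proposal is correct and follows essentially the same route as the paper: split on the two cases of Lemma~\ref{lem:PosEigLemmaGeneral}, use $\lambda_i|\alphaz^{(i)}|=|g_0^{(i)}|\le\beta\sqrt{\eta}$ for the cross term, and control $\lambda_i\cdot\eta^2\min\{1+\tfrac{\rho\eta T^2}{2\lambda_i},T\}^2$ to get $\tO(\eta^{3/2})$. The only cosmetic difference is that where you split on which argument of the $\min$ is active, the paper instead observes that $\lambda_i B$ and $\lambda_i B^2$ (with $B=\min\{\cdot,T\}$) are maximized at the crossover $\lambda_i=\tfrac{\rho\eta T^2}{2(T-1)}$ and bounds them once; both arguments collapse to the same $\rho(\eta T)^3=\tO(\eta^{3/2})$ estimate. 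One small bookkeeping point: in your Case~2 expansion you silently replace $\max\{\eta,|\alphaz^{(i)}|\}$ by $|\alphaz^{(i)}|$; when $|\alphaz^{(i)}|<\eta$ this costs an extra $\eta$ in the increment, but that is already absorbed in your $\tO(\eta)$ term inside $b$, so nothing breaks.
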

\begin{corollary}\label{cor:LambNeg_General}
Let $1\leq T\leq  \tO(\eta^{-1/2})$.
Then if $\lambda_i\leq 0$ and we start in a saddle,  
 the following holds for all $t\in[T]$:
\begin{align*} 
|\lambda_i| \left( (\alphat^{(i)})^2 - (\alphaz^{(i)})^2 \right) 
\gee -\tO( \eta^{3/2} )~.
\end{align*}
\end{corollary}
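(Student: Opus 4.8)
The plan is to derive Corollary~\ref{cor:LambNeg_General} directly from the per-coordinate lower bound of Lemma~\ref{lem:NegEigLemmaGeneral}, combined with a single key observation: for every coordinate $i$ we have $|\lambda_i|\,|\alphaz^{(i)}| \lee \beta\sqrt{\eta}$. To see this, recall that after the translation placing the critical point of $\tf$ at the origin, the point $x_0$ satisfies $H_0 x_0 = g_0$; indeed $g_0 = \nabla \tf(x_0) = \nabla f(x_0)$, since the quadratic approximation matches the gradient exactly at $x_0$ (cf.\ \eqref{eq:GradError} evaluated at $x=x_0$). Hence $g_0^{(i)} = e_i^\top H_0 x_0 = \lambda_i \alphaz^{(i)}$, so that $\sum_i (\lambda_i \alphaz^{(i)})^2 = \|g_0\|^2 \lee \beta^2 \eta$, and in particular $|\lambda_i|\,|\alphaz^{(i)}| \lee \beta\sqrt{\eta}$ for each $i$. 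This extra factor of $\sqrt{\eta}$ is exactly what converts the coordinate growth bounds into a value bound of the right order.

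First I would record the elementary consequence of Lemma~\ref{lem:NegEigLemmaGeneral}. Writing the lemma's lower bound as $|\alphat^{(i)}| \gee \big|\,|\alphaz^{(i)}| - \delta_i\,\big|$ for an appropriate nonnegative $\delta_i$, squaring and expanding gives
\begin{align*}
(\alphat^{(i)})^2 - (\alphaz^{(i)})^2 \gee -2\,|\alphaz^{(i)}|\,\delta_i + \delta_i^2 \gee -2\,|\alphaz^{(i)}|\,\delta_i~,
\end{align*}
so that $|\lambda_i|\big((\alphat^{(i)})^2 - (\alphaz^{(i)})^2\big) \gee -2\,|\lambda_i|\,|\alphaz^{(i)}|\,\delta_i$. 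Everything then reduces to bounding $|\lambda_i|\,|\alphaz^{(i)}|\,\delta_i$ by $\tO(\eta^{3/2})$ in each of the two regimes of the lemma.

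In the first regime (where $|\lambda_i|\,|\alphaz^{(i)}|$ is not too small), Lemma~\ref{lem:NegEigLemmaGeneral} gives $\delta_i = \tO(\eta)$, and combining with $|\lambda_i|\,|\alphaz^{(i)}| \lee \beta\sqrt{\eta}$ yields $|\lambda_i|\,|\alphaz^{(i)}|\,\delta_i \lee \beta\sqrt{\eta}\cdot\tO(\eta) = \tO(\eta^{3/2})$, as needed. In the second regime, $\delta_i = c\eta T + \tO(\eta)$ with $c\in[0,1]$, and since $T = \tO(\eta^{-1/2})$ we have $\eta T = \tO(\eta^{1/2})$, hence $\delta_i = \tO(\eta^{1/2})$; the defining condition of this regime simultaneously forces $|\lambda_i|\,|\alphaz^{(i)}| \lee \frac{\rho (\eta T)^2}{2} = \tO(\eta)$, so $|\lambda_i|\,|\alphaz^{(i)}|\,\delta_i \lee \tO(\eta)\cdot\tO(\eta^{1/2}) = \tO(\eta^{3/2})$. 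Either way the claim follows, mirroring the argument for positive eigenvalues in Corollary~\ref{cor:LambPositive_General}.

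I expect the only real subtlety to be the bookkeeping around the absolute values: Lemma~\ref{lem:NegEigLemmaGeneral} controls $|\alphat^{(i)}|$ rather than $\alphat^{(i)}$ itself, so I must check that squaring preserves the inequality, which it does, since $|\alphat^{(i)}| \gee \big|\,|\alphaz^{(i)}| - \delta_i\,\big|$ implies $(\alphat^{(i)})^2 \gee (|\alphaz^{(i)}| - \delta_i)^2$ regardless of the sign of $|\alphaz^{(i)}| - \delta_i$. The conceptual crux, which does all the work, is the bound $|\lambda_i|\,|\alphaz^{(i)}| \lee \beta\sqrt{\eta}$: it is precisely this factor, absent from a naive estimate that would only give $\tO(\eta)$, that upgrades the decrease of $(\alphat^{(i)})^2$ into the desired $\tO(\eta^{3/2})$ bound on the eigenvalue-weighted quantity.
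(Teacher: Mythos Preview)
Your proposal is correct and follows essentially the same route as the paper: apply the two-case lower bound of Lemma~\ref{lem:NegEigLemmaGeneral}, square, and bound $|\lambda_i|\,|\alphaz^{(i)}|\,\delta_i$ in each regime. The only cosmetic difference is in the first regime: the paper works with the noiseless bound $|\alphat^{(i)}|\geq|\alphaz^{(i)}|$ (giving $\geq 0$) and then absorbs the $\tO(\eta)$ noise correction at the end, whereas you carry $\delta_i=\tO(\eta)$ from the start and invoke the gradient bound $|\lambda_i|\,|\alphaz^{(i)}|\leq\beta\sqrt{\eta}$ explicitly---a bound that is in fact implicitly needed in the paper's ``add back the noise'' step as well.
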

\paragraph{Part 2:} Here we show that the objective's value decreases by $\tOmega(\eta)$ within the first $\tO(\eta^{-1/2})$
rounds.
First note the next lemma showing that the norm of the gradient rises beyond $2\beta \sqrt{\eta}$ within  $\tO(\eta^{-1/2})$ rounds.
\vspace{2pt}
\begin{lemma}\label{lem:Saddle_normGrows_General}
With a probability$\geq1-\xi$, the norm of the gradients rises beyond $2\beta\sqrt{\eta}$ within less than $\tO(\eta^{-1/2})$  steps.
\end{lemma}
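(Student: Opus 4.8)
The plan is to reduce the statement to tracking a single scalar, the component $g_t^{(1)} \eq e_1^\top \nabla f(x_t)$ of the gradient along $e_1$, the eigenvector of $H_0$ with most negative eigenvalue $\lambda_1 = -\gamma$, and then to argue by contradiction. Suppose that $\|g_t\| \lee 2\beta\sqrt{\eta}$ for every $t \lee T := \tO(\eta^{-1/2})$ (otherwise the claim already holds at the first time this bound is violated). Projecting the power-method identity \eqref{eq:PowMethodExplain} onto $e_1$ and using $-\theta\lambda_1 = \theta\gamma$ yields the scalar recursion
\begin{align*}
g_t^{(1)} \eq \lr{1+\frac{\eta\gamma}{\|g_{t-1}\|}} g_{t-1}^{(1)} + \mu_t^{(1)} + \theta\gamma\, n_t^{(1)}~,
\end{align*}
so under the standing hypothesis the multiplicative factor obeys $1+\frac{\eta\gamma}{\|g_{t-1}\|} \gee 1+\frac{\gamma}{2\beta}\sqrt{\eta}$. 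Thus, ignoring the additive terms, $|g_t^{(1)}|$ is amplified by $1+\tOmega(\sqrt{\eta})$ each round, and $\tO(\eta^{-1/2})$ rounds suffice to blow it up by the $\tO(\eta^{-1/2})$ factor needed to carry it from scale $\tOmega(\eta)$ up to the threshold $2\beta\sqrt{\eta}$.

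Second, I would use the noise injections to guarantee a sufficiently large seed in the $e_1$ direction. Since $\theta\, n_t^{(1)}$ is a mean-zero Gaussian of standard deviation $\theta = \tilde{\Theta}(\eta)$, Gaussian anti-concentration gives that, immediately after an injection, $|g^{(1)}| \gee \tOmega(\eta)$ with the stated probability; the periodic injections (one every $N_0 = \tilde\Theta(\eta^{-1/2})$ rounds) ensure such a fresh seed is available once we enter the saddle region, and the probability that some injection inside the window produces a component of size $\tOmega(\eta)$ is $\gee 1-\xi$, the $\log(1/\xi)$ cost being absorbed into $T$. This is exactly the mechanism advertised in the proof sketch: noise plants a component along the escaping direction that the power method then amplifies.

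Third, I would unroll the recursion from the good seed and show that the amplified seed dominates the accumulated deviation. By \eqref{eq:GradError} and the $\rho$-Lipschitzness of the Hessian, the per-step term $\mu_t$ is the increment of the Taylor error and is bounded by $\rho\,\|x_t-x_0\|\,\|x_t-x_{t-1}\| \lee \tO(\eta^{3/2})$ on the noiseless steps (using $\|x_t-x_0\|\lee\tO(\sqrt{\eta})$ and $\|x_t-x_{t-1}\|=\eta$), matching the bound accompanying \eqref{eq:PowMethodExplain}. Summing the amplified increments and comparing against the amplified seed — this is where the calibration $\theta = \tilde{\Theta}(\eta)$ enters — shows that $|g_t^{(1)}|$ stays away from $0$, grows geometrically, and exceeds $2\beta\sqrt{\eta}$ within $T = \tO(\eta^{-1/2})$ rounds. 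Since $\|g_t\| \gee |g_t^{(1)}|$, this contradicts the standing hypothesis, proving the lemma. The bookkeeping on the positive- and negative-eigenvalue coordinates, which certifies that we really remain in the small-gradient regime up to the escape time, is supplied by \cref{lem:PosEigLemmaGeneral,lem:NegEigLemmaGeneral}.

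The main obstacle is this last step: controlling the accumulated deviation $\sum_{s\le t}\big(\prod_{r>s}(1+\tfrac{\eta\gamma}{\|g_{r-1}\|})\big)\mu_s$ so that it cannot cancel the seeded growth. Both the seed and the deviation are magnified by the same power-method product, so the argument hinges on the seed being a large enough multiple of the per-step error — which is precisely what the choice of noise level $\theta$ buys — and on the normalization denominator $\|g_{t-1}\|$ being simultaneously small enough to guarantee amplification yet not so small that the term $\eta\mu_t/\|g_{t-1}\|$ explodes. Converting the single-injection anti-concentration estimate into the stated high-probability guarantee over the whole window is the other delicate point.
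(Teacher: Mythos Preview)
Your proposal is essentially the paper's proof: project onto $e_1$, assume for contradiction that $\|g_t\|\le 2\beta\sqrt{\eta}$ throughout, obtain a multiplicative factor $\ge 1+\tfrac{\gamma}{2\beta}\sqrt{\eta}$, seed $|g^{(1)}|=\tOmega(\eta)$ via Gaussian anti-concentration at a noisy step, and repeat over $\Theta(\log(1/\xi))$ noise periods. The paper packages your ``unrolled sum vs.\ amplified seed'' step as a per-round induction, showing $|g_t^{(1)}|\ge|g_1^{(1)}|(1+\tfrac{\gamma\sqrt{\eta}}{4\beta})^{t-1}$ by absorbing the error via the explicit calibration $\tfrac{\rho\eta^2 N_0}{|g_1^{(1)}|}\le\tfrac{\gamma\sqrt{\eta}}{4\beta}$; this is cleaner than bounding the geometric sum but equivalent.

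Two small confusions to clear up. First, there is no term ``$\eta\mu_t/\|g_{t-1}\|$'' in the recursion: $\mu_t$ enters additively, so a tiny $\|g_{t-1}\|$ only \emph{helps} the amplification and never hurts --- your stated obstacle about the denominator being ``not so small'' is a non-issue. Second, \cref{lem:PosEigLemmaGeneral,lem:NegEigLemmaGeneral} play no role in this lemma: the small-gradient regime is your contradiction hypothesis, so it is assumed, not certified (those lemmas are used only later, in \cref{lem:ImproveLem_General}, to control the function-value change).
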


Next we show that whenever $\| g_t\|\geq 2\beta\sqrt{\eta}$ then we have improved by value.

\begin{lemma}\label{lem:ImproveLem_General}
Suppose we are in a saddle, and $\|g_0\|\leq \beta\sqrt{\eta}$ then for the first $t$ such that $\| g_t\|\geq 2\beta\sqrt{\eta}$, the following holds w.p.$\geq 1-\xi$
$$f(x_t) \lee  f(x_0) -  \Omega(\eta)~. $$
\end{lemma}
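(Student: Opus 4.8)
The plan is to reduce the statement to the quadratic approximation $\tf(x)=\frac12 x^\top H_0 x$ and to show that the growth of $\|g_t\|$ from $\|g_0\|\le\beta\sqrt\eta$ up to $\ge2\beta\sqrt\eta$ at the stopping time is necessarily driven by the directions of negative curvature, which in turn forces an $\Omega(\eta)$ decrease in value. First I would control the displacement: by Lemma~\ref{lem:Saddle_normGrows_General} the stopping time satisfies $t=\tO(\eta^{-1/2})$, and since each deterministic step moves by $\eta$ while the $\tO(1)$ noise injections over this horizon each contribute $\tO(\eta)$ (w.p.$\ge 1-\xi$ by Gaussian concentration), we obtain $\|x_t-x_0\|=\tO(\sqrt\eta)$. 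Combined with the $\rho$-Lipschitz Hessian this yields the third-order remainder bound $|f(x)-\tf(x)|\le\frac\rho6\|x-x_0\|^3=\tO(\eta^{3/2})$ with $\tf(x_0)=f(x_0)$ in the Taylor form \eqref{eq:QuadApprox1}, so it suffices to prove $\tf(x_t)-\tf(x_0)\le-\Omega(\eta)$, a quantity we may evaluate in the centered form \eqref{eq:QuadApprox2}. The same displacement bound gives, through \eqref{eq:GradError}, $\|g_t-\nabla\tf(x_t)\|\le\frac\rho2\|x_t-x_0\|^2=\tO(\eta)$.

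Working in the eigenbasis of $H_0$, I would write $\tf(x_t)-\tf(x_0)=\frac12\sum_i\lambda_i\Delta_i$ with $\Delta_i=(\alphat^{(i)})^2-(\alphaz^{(i)})^2$, and $\|\nabla\tf(x)\|^2=\sum_i\lambda_i^2(\alpha^{(i)})^2$. The positive-curvature contribution to the value change is $\le\tO(\eta^{3/2})$ directly by Corollary~\ref{cor:LambPositive_General}, so the task reduces to showing the negative-curvature contribution is $\le-\Omega(\eta)$. The engine is a gradient-budget argument. The stopping condition together with the error bound above gives $\|\nabla\tf(x_t)\|^2\ge(2\beta\sqrt\eta-\tO(\eta))^2\ge 4\beta^2\eta-\tO(\eta^{3/2})$, whereas at the start $\|\nabla\tf(x_0)\|^2=\|g_0\|^2\le\beta^2\eta$. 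Since Corollary~\ref{cor:LambPositive_General} combined with $\lambda_i\le\beta$ forces $\lambda_i^2\Delta_i\le\tO(\eta^{3/2})$ for every $\lambda_i>0$, the positive directions can absorb at most $\tO(\eta^{3/2})$ of this gradient-norm growth, and the remainder must live in the negative directions: $\sum_{\lambda_i<0}\lambda_i^2\Delta_i\ge 3\beta^2\eta-\tO(\eta^{3/2})$.

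The final and most delicate step converts this lower bound on the negative-curvature \emph{gradient} growth into a lower bound on the negative-curvature \emph{value} drop $\sum_{\lambda_i<0}|\lambda_i|\Delta_i$. The difficulty is that a negative eigenvalue may be tiny, so a shrinking coordinate ($\Delta_i<0$) could in principle cancel the gain. I would split the negative indices into those that grew ($\Delta_i\ge0$) and those that shrank ($\Delta_i<0$). For the shrinking ones Corollary~\ref{cor:LambNeg_General} gives $|\lambda_i|\Delta_i\ge-\tO(\eta^{3/2})$, so they cost at most $\tO(\eta^{3/2})$; for the growing ones I use that every negative eigenvalue obeys $|\lambda_i|\le\gamma$ (because $\lambda_1=-\gamma$ is the most negative), so $\lambda_i^2\Delta_i\le\gamma|\lambda_i|\Delta_i$ and hence $|\lambda_i|\Delta_i\ge\frac1\gamma\lambda_i^2\Delta_i$. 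Summing over both groups, $\sum_{\lambda_i<0}|\lambda_i|\Delta_i\ge\frac1\gamma(3\beta^2\eta-\tO(\eta^{3/2}))-\tO(\eta^{3/2})=\Omega(\eta)$, so the negative-curvature value change is $-\frac12\sum_{\lambda_i<0}|\lambda_i|\Delta_i\le-\Omega(\eta)$. Adding the $\le\tO(\eta^{3/2})$ positive-curvature change and the $\tO(\eta^{3/2})$ Taylor remainder yields $f(x_t)-f(x_0)\le-\Omega(\eta)$.

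I expect this conversion to be the main obstacle: pointing the crude bound $\lambda_i^2\le\gamma|\lambda_i|$ in the useful direction relies precisely on $e_1$ being the most negative eigenvector, and the grown/shrunk split is exactly what stops small negative eigenvalues from erasing the improvement. Everything else is routine bookkeeping: tracking the $\tO(\eta^{3/2})$ error terms (the Taylor remainder, the gradient-approximation error from \eqref{eq:GradError}, and the $d$-fold sums of per-coordinate slack coming out of Corollaries~\ref{cor:LambPositive_General} and~\ref{cor:LambNeg_General}) and verifying that they stay below the $\Omega(\eta)$ main term once $\eta\le\eta_{\max}=\tO(1/d^2)$.
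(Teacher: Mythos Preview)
Your proposal is correct and follows essentially the same route as the paper: reduce to the quadratic model via the $\tO(\eta^{3/2})$ Taylor remainder, control positive-curvature coordinates by Corollary~\ref{cor:LambPositive_General}, use a gradient-norm budget ($\|\tg_t\|^2-\|\tg_0\|^2\ge\Omega(\eta)$) to force growth into the negative-curvature block, and convert $\lambda_i^2\Delta_i$ to $|\lambda_i|\Delta_i$ with Corollary~\ref{cor:LambNeg_General} handling the shrunk indices. The only substantive difference is in the conversion constant: the paper uses $|\lambda_i|\le\beta$ for all $i$ (so divides by $\beta$), whereas you use $|\lambda_i|\le\gamma$ on the negative block (so divide by $\gamma\le\beta$), which is sharper but relies on the paper's w.l.o.g.\ normalization $\lambda_{\min}(H_0)=-\gamma$; both yield $\Omega(\eta)$.
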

We are now ready to prove Lemma~\ref{lem:SaddleGeneralMain}
\begin{proof}[Proof of Lemma~\ref{lem:SaddleGeneralMain}]
The Lemma follows directly by Lemmas~\ref{lem:Saddle_normGrows_General},~\ref{lem:ImproveLem_General}.
\end{proof}

In Appendix~\ref{Appendix_Saddle}  we provide the complete proofs for the statements that  appear in this section.

\section{Experiments}\label{sec:Experiments}
In many challenging machine learning tasks, first and second order moments are not sufficient in order to extract the underlying parameters of the problem; and higher order moments are required. Such problems include
Gaussian Mixture Models (GMMs), Independent Component Analysis (ICA), Latent Dirichlet Allocation (LDA), and more.
Tensors of order greater than $2$ may capture such high order statistics, and their decomposition enables to reveal the underlying parameters of the problem (see \cite{anandkumar2014tensor} with references therein). Thus, tensor decomposition methods have been extensively studied over the years \cite{harshman1970foundations,kolda2001orthogonal,anandkumar2014tensor}.
While most studies of tensor decomposition methods have focused on the offline setting,  \cite{ge} recently proposed a new setting of online tensor decomposition which is more appropriate for big data tasks. 

Tensor decomposition is an intriguing multi-modal optimization task, which provably acquires
many saddle points. Interestingly, every local minimum is also a global minimum for this task. Thus we decided to focus our experimental study on this task.
  
In what follows, we briefly review tensors and the online decompositions task. Then we present our experimental results comparing our method to  noisy GD, which was proposed in \cite{ge}.
\begin{figure*}[t]
\centering
\subfigure[]{ \label{fig:ExpsLow}
\includegraphics[trim = 15mm 67mm 22mm 67mm, clip,
width=0.31\textwidth ]{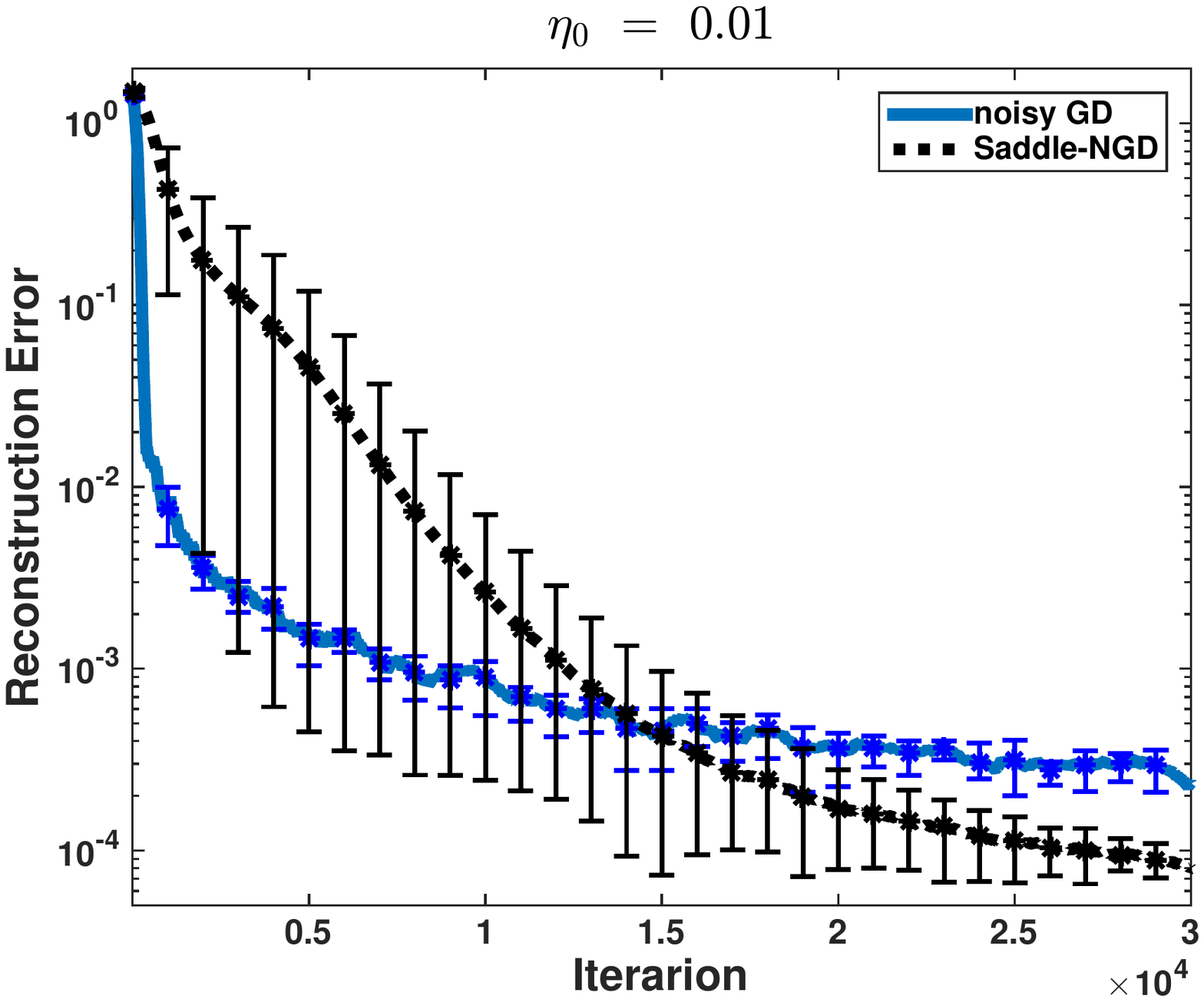}}
\subfigure[]{\label{fig:ExpsMed}
 \includegraphics[trim =  15mm 67mm 27mm 67mm,  clip,
width=0.31\textwidth ]{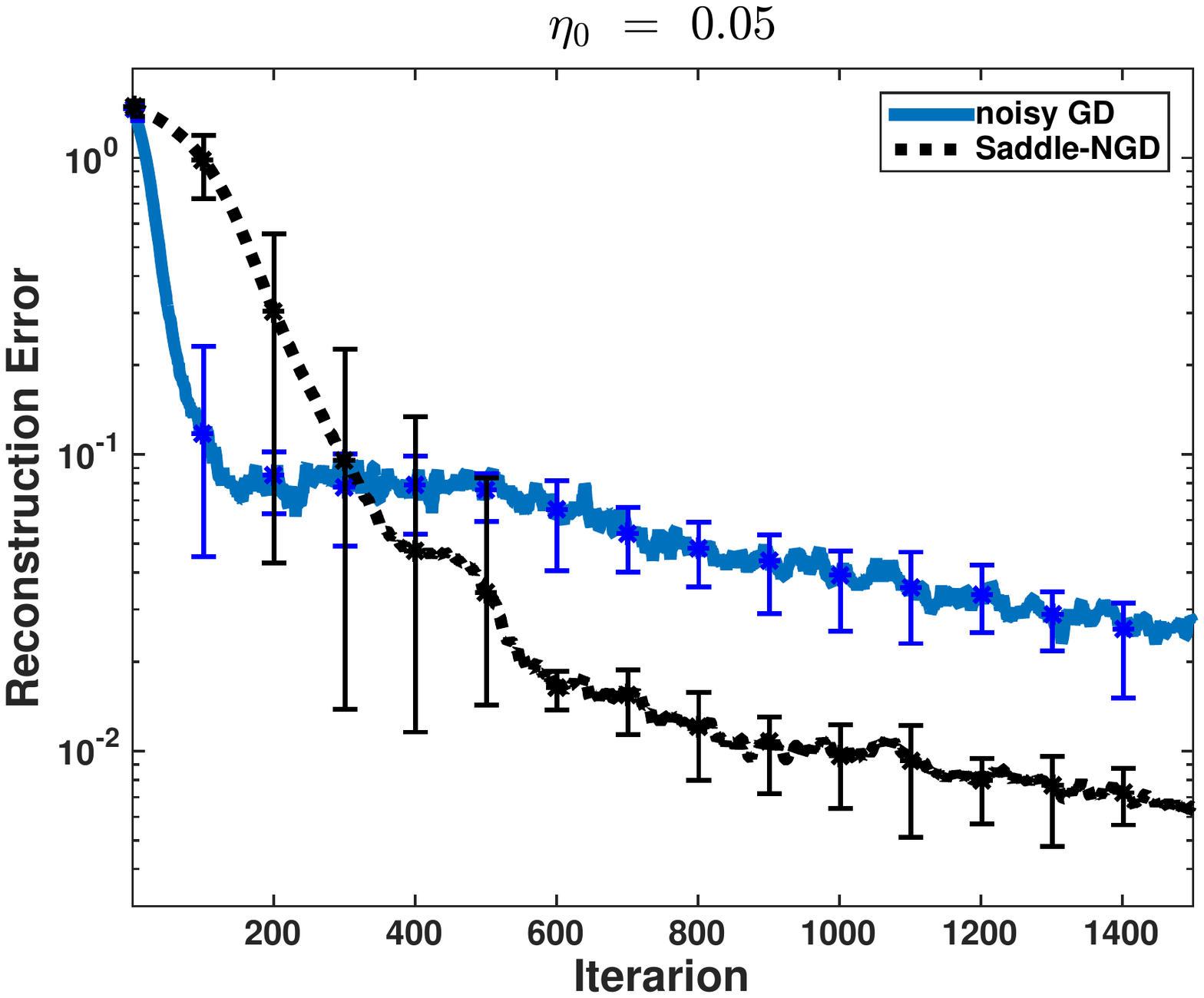}}
 \subfigure[]{ \label{fig:ExpsHigh}
\includegraphics[trim =  15mm 67mm 27mm 67mm,  clip,
width=0.31\textwidth ]{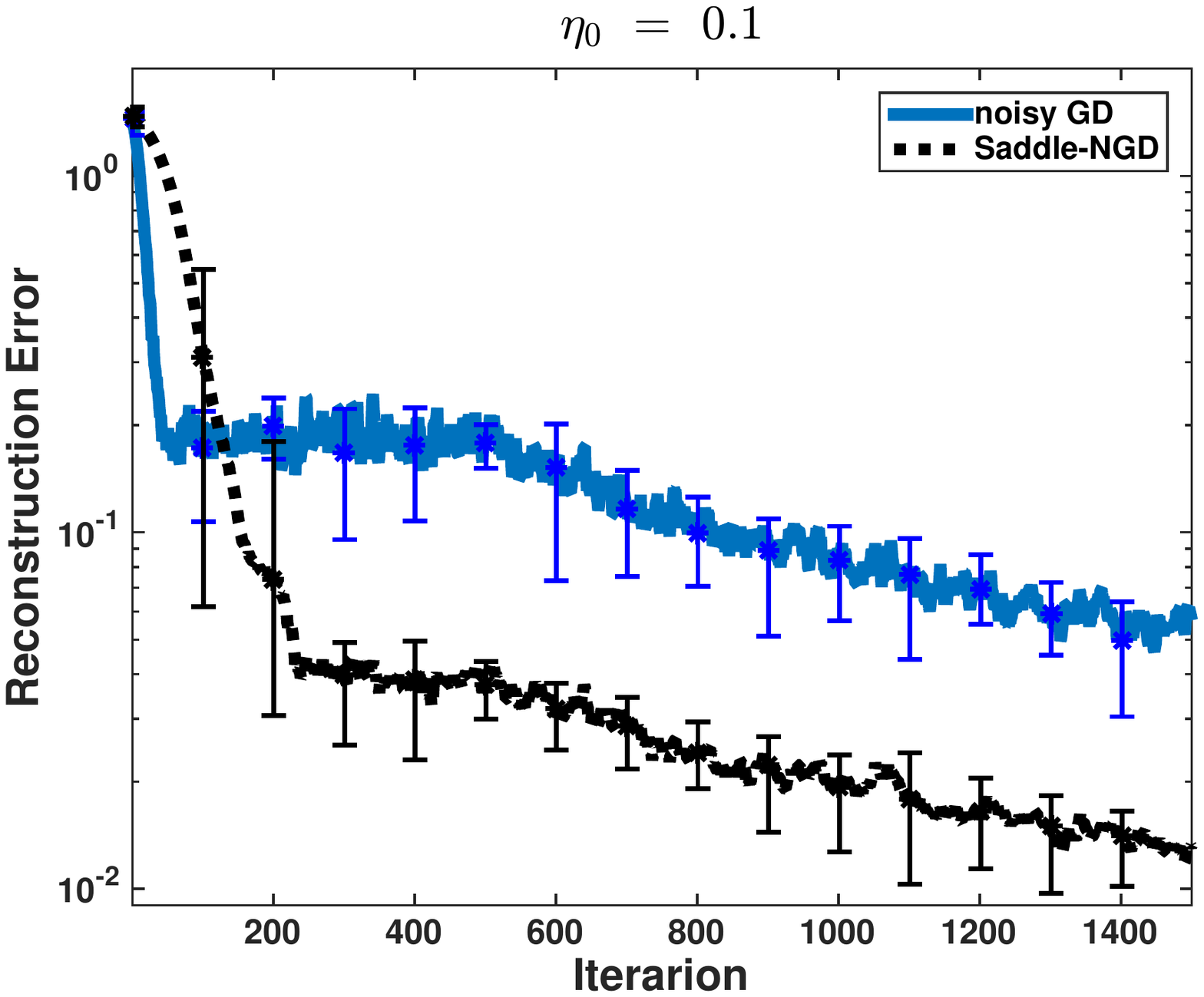}}
\caption{ Noisy-GD Vs. Saddle-NGD for the online ICA problem.} 
\label{fig:ExpsAll}
\end{figure*}

\paragraph{Tensor Decomposition}
A $p$-th order tensor $T\in \btens^p \reals^d$ is a $p$ dimension array. 
Here we focus on $4$-th order tensors $T\in \btens^4 \reals^d$, and use $T_{i_1,i_2,i_3,i_4},\; ( i_1,i_2,i_3,i_4 \in [d])$ to denote the $(i_1,i_2,i_3,i_4)$-th entry of $T$. For a vector $v\in \reals^d$ we use $v^{\tens 4}$ to denote its $4$-th tensor power:
$$(v^{\tens 4})_{i_1,i_2,i_3,i_4} 
\eq 
 v_{i_1}v_{i_2}v_{i_3}v_{i_4} ~.$$
We may represent $T\in  \btens^4 \reals^d$ as a multilinear map, such that  $\forall u,v,z,w \in \reals^d$:
$$T(u,v,z,w) 
\eq 
\sum_{i_1,i_2,i_3,i_4} T_{i_1,i_2,i_3,i_4}  u_{i_1} v_{i_2} z_{i_3} w_{i_4}~.$$
 
We say that $T \in  \btens^4 \reals^d$ has an orthogonal decomposition, if there exists an orthonormal basis $\{ a_i\in \reals^d\}_{i=1}^d $ such that:
$$T \eq \sum_{i=1}^d a_i^{\tens 4} ~.$$
and we call $\{ a_i\}_{i=1}^d$ the components of $T$.
Given $T$ that has an orthogonal decomposition, the offline decomposition problem is to find its components (which are unique up to permutations and sign flips).

\cite{ge} suggest to calculate the components of a decomposable tensor $T$ by minimizing the following \emph{strict-saddle} objective:
\begin{align}\label{eq:TensSaddleObj}
\min_{\forall i, \|u_i\| =1}~\sum_{j\neq i}T(u_i,u_i,u_j,u_j)~.
\end{align}
\paragraph{Online Tensor Decomposition:} In many machine learning tasks,  data often arrives sequentially (sampled from some unknown distribution). When the dimensionality of the problem is large,  it is desirable to store only small batches of the data.
In many such tasks where tensor decomposition is required, data samples $\{z_i\}_i$ arrive from some unknown distribution $\D$. 
And we aim at decomposing a tensor $T$, which is often an expectation over mulitilinear operators $q(z)$, i.e. $T = \E_{z\sim \D}[q(z)]$. Using the linearity of multilinear maps, and the objective appearing in Equation~\eqref{eq:TensSaddleObj}, we can formulate such problems as follows:
\begin{align}\label{eq:TensSaddleObj3}
\min_{\forall i, \|u_i\| =1}\E_{z\sim \D}[\phi_z(u)]
  ~,
\end{align}
 where  $\phi_z(u) \eq \sum_{i \neq  j}q(z)(u_i,u_i,u_j,u_j)$.

\textbf{ICA task:} We discuss a version of the ICA task where we receive samples $y_i: =A x_i\in\reals^d$,  $A\in\reals^{d\times d}$ is an unknown orthonormal linear transformation and   $\{x_i \sim \text{Uni}\{ {\pm1}^d\}\}_i$. Based on the samples $\{y_i\}_i$, 
we aim at reconstructing the matrix $A$ (up to permutation of its columns and sign flips).
\cite{ge} have shown that the ICA problem can be formulated as in Equation~\eqref{eq:TensSaddleObj3}, and demonstrated how to calculate $\nabla \phi_z(u)$. This enables to use the samples in order to produce unbiased gradient estimates of the decomposition objective (and requires to store only small  data minibatches).
\paragraph{ICA Experiments:} We adopt the online tensor decomposition setting for ICA suggested above, and present experimental results comparing our method to the noisy GD method of \cite{ge}.
We take $d=10$, and our performance measure is the reconstruction error defined as:
$$\mathcal{E} \eq \left(\|T - \sum_i u_i^{\tens 4} \|_F\right)^2/\| T\|_F^2~.$$
In our experiments, both methods use a minibatch of size $500$ to calculate gradient estimates. Moreover, both methods employ the following  learning rate rule:
\begin{equation}\label{eq:LR_ruleExperiments}
\eta_t
\eq
\begin{cases}
 \eta_0  &\quad \text{if $t\leq 500$ } \\ 
 \frac{500}{t}\eta_0	&\quad \text{otherwise }
\end{cases}
~.
\end{equation} 
In Figure~\ref{fig:ExpsAll} we present our results\footnote{Note that we have repeated each experiment $10$ times. Figure~\ref{fig:ExpsAll} presents the reconstruction  error averaged over these 10 runs, as well as error bars},
for three values of initial learning rates $\eta_0 \in \{0.01,0.05,0.1\}$.
As can be seen in all three cases, noisy-GD obtains a faster initial improvement. Yet, at some point the error obtained by Saddle-NGD decreases sharply, and eventually Saddle-NGD outperforms noisy-GD.
We found that this behaviour  persisted  when we employed  learning rate rules other  than~\eqref{eq:LR_ruleExperiments}.\footnote{This behaviour also persisted when have employed several noise injection magnitudes to both algorithms.}
Note that for $\eta_0\in\{0.05,0.1 \}$ Saddle-NGD outperforms noisy-GD within $\approx 400$ iterations, for $\eta=0.01$ this only occurs after $\approx 2\cdot 10^4$ iterations. 

\section{Discussion}
We have demonstrated both empirically and theoretically the benefits of using normalized gradients rather than gradients for an intriguing family of non-convex  objectives.
It is natural to ask what are the limits of the achievable rates for evading saddles. Concretely, we ask wether we can  do better than Saddle-NGD using only first order information. 

\section*{Acknowledgement}
I would like to thank Elad Hazan for many useful discussions during the early stages of this work.
\bibliographystyle{abbrvnat}
\bibliography{bib}

\newpage
\appendix
\section{Proof of Theorem~\ref{theorem:MAinPaper}} \label{Proof_theorem:MAinPaper}
\begin{proof}[Proof of Theorem~\ref{theorem:MAinPaper}]
We will now show that starting at any $x_0\in \reals^d$,  within $\tO(\eta^{-3/2})$ rounds then w.p.$\geq 1/2$ we arrive $\tO(\sqrt{\eta})$ close to a local minimum.  Since this is true for any starting point, 
repeating this for $\Theta(\log(1/\xi))$ epochs
 implies that  within $\tO(\eta^{-3/2})$ rounds then w.p.$\geq 1-\xi$ we reach a point that is $\tO(\sqrt{\eta})$ close to a local minimum.

Let us define the following sets with respects to the three scenarios of the strict saddle property:
\begin{align*}
&\A_1\eq \{x:\; \|\nabla f(x)\|\geq \beta\sqrt{\eta}\} \\
&\A_2 \eq \{x:\; \|\nabla f(x)\|< \beta\sqrt{\eta} \text{ and } \lambda_{min}(\nabla^2 f(x))\leq -\gamma\} \\
&\A_3 \eq  \A_1^c \cap A_2^c
\end{align*}
By the strict saddle property, choosing $\eta$ small enough  ensures that all points in $\A_3$ are $\tO(\sqrt{\eta})$ close to a local minimum. In what follows we show that with a high probability we reach a point in $\A_3$ within $\tO(\eta^{-3/2})$ rounds.

Define the following sequence of stopping times with $\tau_0=0$, and
\begin{equation}\nonumber
\tau_{i+1}
\eq
\begin{cases}
	\tau_i+1	&\quad x_{\tau_i}\in \A_1\cup \A_3  \\ 
	\tau_i+T(x_{\tau_i})         	&\quad x_{\tau_i}\in \A_2  \\  
\end{cases}
\end{equation}
Where given $x\in\A_2$ then $T(x)$ is defined to be the first time after  that Saddle-NGD reaches a point with a value lower by at least $\tOmega(\eta)$ than $f(x)$.
Define a sequence of filtrations $\{ \F_t \}_{t}$, as follows, $\F_t=\{x_0,\ldots,x_t\}$, and 
note that according to Lemma~\ref{lem:SaddleGeneralMain} and Lemma~\ref{lem:LargeGrads} the following holds:
\begin{align}
&\E[f(x_{\tau_{i+1}})-f(x_{\tau_i})\mid x_{\tau_i}\in\A_1,\F_{\tau_i}]
\lee
 -\tOmega(\eta^{3/2}) \label{eq:Main1}\\
&\E[f(x_{\tau_{i+1}})-f(x_{\tau_i})\mid x_{\tau_i}\in\A_2,\F_{\tau_i}]
\lee
 -\tOmega(\eta)\label{eq:Main2}
\end{align}
The second inequality holds  by the following Corollary of  Lemma~\ref{lem:SaddleGeneralMain}
\begin{corollary}
Suppose that $\|g_0\|\leq \beta \sqrt{\eta}\leq \nu$ and we are in a saddle point, meaning that $\lambda_{min}(\nabla^2 f(x_0))\leq -\gamma$.
And consider the Saddle-NGD algorithm performed over a strict-saddle function $f$. Then within  $t \leq \tO( \eta^{-1/2})$ steps, we will have :
$$\E[f(x_{t})-f(x_0)\mid x_0\in\A_2, \F_0] 
\lee  
 -  \tOmega(\eta)~. $$
\end{corollary}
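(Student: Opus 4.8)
The plan is to upgrade the high-probability guarantee of Lemma~\ref{lem:SaddleGeneralMain} into a statement in expectation, using the boundedness assumption $|f(x)|\leq B$ to control the contribution of the low-probability failure event. The crucial observation is that Lemma~\ref{lem:SaddleGeneralMain} holds for an \emph{arbitrary} confidence parameter $\xi\in[0,1]$, with the step count $\tO(\eta^{-1/2})$ hiding only a $\log(1/\xi)$ dependence. Hence I am free to instantiate it with a much smaller failure probability $\delta=\tO(\eta)$ without inflating the horizon beyond $\tO(\eta^{-1/2})$.

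First I would fix a horizon $T_0=\tO(\eta^{-1/2})$ and, conditioning throughout on $x_0\in\A_2$ and $\F_0$, define the stopping time $\tau$ as the first round $t\leq T_0$ at which $f(x_t)\leq f(x_0)-\tOmega(\eta)$, setting $\tau=T_0$ otherwise (this is exactly the evaluation time $T(x_0)$ used in the theorem's stopping-time construction). Applying Lemma~\ref{lem:SaddleGeneralMain} with confidence $\delta$ guarantees that the good event $G=\{\tau<T_0\}$ has probability at least $1-\delta$, and on $G$ we have $f(x_\tau)-f(x_0)\leq-\tOmega(\eta)$ by construction.

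Next I would split the conditional expectation over $G$ and its complement. On $G^c$ the increase in value is at most $\sup_{x,y}|f(x)-f(y)|\leq 2B$ by the boundedness assumption, so
\[
\E[f(x_\tau)-f(x_0)\mid x_0\in\A_2,\F_0]\lee -(1-\delta)\,\tOmega(\eta)+2B\delta~.
\]
(If one prefers a sharper accounting of the non-failure regime, Corollaries~\ref{cor:LambPositive_General} and~\ref{cor:LambNeg_General} already show that while the gradient stays small the value rises by at most $\tO(\eta^{3/2})$, so the only genuinely lossy case is the one where the gradient has grown but the improvement of Lemma~\ref{lem:ImproveLem_General} fails — still of probability $\leq\delta$ and still bounded by $2B$.) Choosing $\delta=\tO(\eta)$ small enough that $2B\delta$ is at most half the guaranteed decrease then yields $\E[f(x_\tau)-f(x_0)\mid x_0\in\A_2,\F_0]\leq-\tOmega(\eta)$, as claimed.

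The main obstacle is purely the bookkeeping of the failure probability: I must verify that driving $\delta$ down to $\mathrm{poly}(\eta)$ is consistent with the $\tO$ notation and does not secretly enlarge the step count. This is precisely where the $\log(1/\xi)$ factor hidden inside the horizon of Lemma~\ref{lem:SaddleGeneralMain} is essential — with $\delta=\tO(\eta)$ we have $\log(1/\delta)=O(\log(1/\eta))$, which is absorbed into $\tO$, so the horizon remains $\tO(\eta^{-1/2})$ and no circularity arises.
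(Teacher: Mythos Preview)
Your approach is correct and is precisely the argument the paper leaves implicit: it states the corollary as an immediate consequence of Lemma~\ref{lem:SaddleGeneralMain} without spelling out the details, and the natural way to fill those in is exactly what you do --- instantiate the lemma at confidence $\delta=\tO(\eta)$, use $|f|\leq B$ on the failure event, and observe that $\log(1/\delta)=O(\log(1/\eta))$ is absorbed into the $\tO(\eta^{-1/2})$ horizon. Your capped stopping time $\tau$ also lines up with the $T(x_0)$ used in the proof of Theorem~\ref{theorem:MAinPaper}, so both the bound $\tau\leq\tO(\eta^{-1/2})$ and the expectation inequality are delivered in the form the theorem needs.
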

Note that according to the above corollary we have $T(x)\leq \tO(\eta^{-1/2}), \;\forall x\in\A_2$.
Combining Equations~\eqref{eq:Main1},\eqref{eq:Main2} the following holds:
\begin{align}
&\E[f(x_{\tau_{i+1}})-f(x_{\tau_i})\mid x_{\tau_i}\notin \A_3]
\lee 
-(\tau_{i+1}-\tau_i)\tOmega(\eta^{3/2}) \label{eq:Main3}~.
\end{align}
Now define a series of events $\{ E_i\}_i$  as follows $E_i = \{\exists j\leq i: x_{\tau_j}\in\A_3 \}$.
Note that $E_{i-1}\subset E_{i}$ and therefore $P(E_{i-1})\leq P(E_i)$. 
Now consider $f(w_{\tau_{i+1}})1_{E_i}$
\begin{align}
\E&[f(x_{\tau_{i+1}})1_{E_i}-f(x_{\tau_i})1_{E_{i-1}}]  \nonumber\\
&\eq
 \E[f(x_{\tau_{i+1}})1_{E_{i-1}}-f(x_{\tau_i})1_{E_{i-1}}]+\E[f(x_{\tau_{i+1}})(1_{E_i}-1_{E_{i-1}})] \nonumber\\
&\eq 
\E[f(x_{\tau_{i+1}})-f(x_{\tau_i})]+\E[f(x_{\tau_{i+1}})-f(x_{\tau_i})\mid E_i^c]P(E_i^c)
+B(P(E_i)-P(E_{i-1}))\nonumber\\
&\lee
 \E[f(x_{\tau_{i+1}})-f(x_{\tau_i})]-(\tau_{i+1}-\tau_i)\Omega(\eta^{3/2})P(E_i^c)
+B(P(E_i)-P(E_{i-1}))~.
\end{align}
Summing the above equation over $i$ we conclude that:
\begin{align*}
\E&[f(x_{\tau_{i+1}})1_{E_i}]-f(x_0)  \\
&\lee 
\E[f(x_{\tau_{i+1}})-f(x_{0})]-\tau_{i+1}\tOmega(\eta^{3/2})P(E_i^c)
+B~.
\end{align*}
Since $|f(x)|\leq B,\; \forall x$ we conclude that setting $\tau_{i+1}\geq \tOmega(\eta^{-3/2})$ then  $P(E_i)\geq 1/2$.
Since this is true for any starting point then performing  the Saddle-NGD procedure for $\tO(\eta^{-3/2} )$
rounds ensures that we arrive $\tO(\sqrt{\eta})$ close to a local minimum ($\A_3$) at least once.  Lemma~\ref{lem:LocalOptFull} ensures that once we arrive at a local minimum we remain at its $\tO(\sqrt{\eta})$ proximity
w.p.$\geq 1-\xi$. Finally, since $f$ is $\beta$-smooth, this $\tO(\sqrt{\eta})$ proximity  implies that we reach a point which is $\tO(\eta)$ close by value to $f(x^*)$-the value of the local minimum.
\end{proof}
\section{ Proof of Lemma~\ref{lem:LocalOptFull} (Local Minimum)}\label{sec:Omitted Proof Local Minimum}
Here we  prove  Lemma~\ref{lem:LocalOptFull} regarding the local minimum for the general case which includes the noisy updates. In Section~\ref{sec:Proof_lem:StrCvxity} we prove Lemma~\ref{lem:StrCvxity} which is used during the proof of Lemmas~\ref{lem:LocalOptFull},\ref{lem:LocalOpt}.

\begin{proof}[Proof of Lemma~\ref{lem:LocalOptFull}]
Recall that according to Algorithm~\ref{algorithm:SNGD}, once we had a noisy update, we use noiseless updates for the following $\Omega(\eta^{-1/2})$ rounds.  
Now, due to the local strong convexity 
we know that $\|x_0 -x^*\| \leq \frac{1}{\alpha}\|g_0\| \leq \frac{\beta}{\alpha} \sqrt{\eta}$. 
We will now show that if the distance of $x_1$ from the local minimum $x^*$ increases beyond $\frac{\beta}{\alpha}\sqrt{\eta}$ due to the noisy update at round $t=0$, then this distance will be decreased below $\frac{\beta}{\alpha}\sqrt{\eta}$ in the following $\tilde{\Theta}(\eta^{-1/2})$ rounds of noiseless updates. 

Let us first bound the increase in the  square distance to $x^*$ due to the first noisy update. 
\begin{align}\label{eq:localMin2}
\|x_{1}-x^*\|^2 &
\eq
 \|x_{0} -x^* \|^2 -2(\eta \hat{g}_0+\theta n_0)^\top (x_{0}-x^*) 
+\| \eta \hat{g}_0 + \theta n_0 \|^2 \nonumber \\
&\lee
 \|x_{0} -x^* \|^2 + \tO(\eta^{3/2}\sqrt{d})+ \tO(\eta^2 d)\nonumber \\
&\lee
 \|x_{0} -x^* \|^2 +\eta~.
\end{align}
Here we used $\hatg_0(x_0-x^*)\geq 0$ which follows by local strong-convexity around $x^*$, we also used   $\theta n_0\sim \tO(\eta)\N(0,\I_d)$, which implies that w.p.$\geq1-\xi$ we have $\|\theta n_0 \|\leq \tO(\eta\sqrt{d}\log(1/\xi))$. We also used $\|x_0 -x^*\| \leq \O( \sqrt{\eta})$. The last inequality follows by choosing $\eta\leq \tO(1/d)$.

Let $t\geq 2$, and suppose  $\|x_{t-1}-x^*\|\geq \frac{\beta}{\alpha}\sqrt{\eta}$, the Saddle-NGD update rule implies:
\begin{align}\label{eq:localMin}
\|x_{t}-x^*\|^2 
&\eq
 \|x_{t-1} -x^* \|^2 -2\eta \hat{g}_t^\top (x_{t-1}-x^*) + \eta^2 \nonumber \\
& \eq
  \|x_{t-1} -x^* \|^2 -2\eta \frac{1}{\|g_t\|} {g}_t^\top (x_{t-1}-x^*) + \eta^2 \nonumber \\
&\lee
  \|x_{t-1} -x^* \|^2 -2\eta \frac{1}{\|g_t\|} \alpha \|x_{t-1}-x^*\|^2 + \eta^2    &\mbox{Lemma~\ref{lem:StrCvxity}}\nonumber \\
&\lee
 \|x_{t-1} -x^* \|^2 -2\eta\frac{\alpha}{\beta}  \|x_{t-1}-x^*\| + \eta^2  &\mbox{smoothness}\nonumber \\
&\lee
 \|x_{t-1} -x^* \|^2-\eta\sqrt{\eta} &\mbox{$\|x_{t-1}-x^*\|\geq\frac{\beta}{\alpha}\sqrt{\eta}$}
\end{align}
here in the first inequality we use Lemma~\ref{lem:StrCvxity},  the second inequality uses $\|g_t\|\leq \beta \|x_t-x^* \|$ which follows from smoothness, and the last inequality uses $\|x_{t-1}-x^*\|\geq\frac{\beta}{\alpha}\sqrt{\eta}$.

Combining Equations~\eqref{eq:localMin2},\eqref{eq:localMin} it follows that within the total of 
$\Theta(\eta^{-1/2})$ rounds of noiseless updates the distance to the local minimum must decrease to $\frac{\beta}{\alpha}\sqrt{\eta}$.
The rest of the proof goes along the same lines as the proof of Lemma~\ref{lem:LocalOpt}.
\end{proof}
\subsection{Proof of Lemma~\ref{lem:StrCvxity}}\label{sec:Proof_lem:StrCvxity}
\begin{proof}
Writing the strong-convexity inequality for $F$ at both $x,x^*$, and using   $\nabla F(x^*)^\top(x-x^*)\geq 0$, which follows by the optimality of $x^*$,  we have:
\begin{align}
F(x^*) -F(x)&\gee \nabla F(x)^\top(x^*-x) + \frac{\alpha}{2} \| x-x^*\|^2\\
F(x) -F(x^*)&\gee   \frac{\alpha}{2} \| x-x^*\|^2
\end{align}
summing the above equations the lemma follows.
\end{proof}

\section{ Omitted Proofs from Section~\ref{sec:SaddleMetaSection} (Saddle Analysis) }
\label{Appendix_Saddle}
\subsection{Proof of Lemma~\ref{lem:PosEigLemmaGeneral}} \label{Proof_lem:PosEigLemmaGeneral}
\begin{proof}
Since  $T=\tO(\eta^{-1/2})$, and Saddle-NGD perform noisy updates once every $N_0=\Theta(\eta^{-1/2})$ rounds, it follows that the total increase in $|\alpha_{t}^{(i)}|$ due to noisy rounds is bounded by
 $\theta \frac{T}{N_0} = \tO(\eta)$. Thus in the rest of the proof we assume noiseless  updates, and show the following to hold:
 \begin{equation}\label{eq:InductCasesProof}
|\alphat^{(i)}| 
\lee
\max\{\eta, |\alphaz^{(i)} | \}+
\begin{cases}
 \eta &\quad \text{if $\lambda_i |\alphaz^{(i)}| \geq (\rho/2)\eta^2 T^2$ } \\ 
 \eta\min\{1+\frac{\rho\eta T^2}{2 \lambda_i},T \}	&\quad \text{otherwise }
\end{cases}
~.
\end{equation}

According to Equation~\eqref{eq:GradError} then for every $x_t\in \reals^d$ we have:
\begin{align*}
g_t \eq \nabla f(x_t) \eq \nabla \tf(x_t)+\Deltat \eq H_0 x_t + \Deltat~,
\end{align*}
here we used $\tf(x) =0.5x^\top H_0 x$, and the notation  $\Deltat:=\nabla f(x_t) - \nabla \tf(x_t)$. Thus, combining Equation~\eqref{eq:GradError}, together with  $\| x_t-x_0\|\leq \eta T,\; \forall t\in[T]$(which follows by the Saddle-NGD update rule), we obtain:
\begin{align}\label{eq:DeltaUpperBound}
\|\Deltat \|
\lee (\rho/2)\eta^2 T^2, \quad \forall t\in[T] ~.
\end{align}
Denote $\Deltat^{(i)} = e_i^\top \Deltat$, then coordinate-wise the NGD rule translates to 
\begin{align*}
\alphatp^{(i)} & 
\eq 
\alphat^{(i)} -\eta \frac{g_t^{(i)}}{\| g_t\|} \\
&\eq
\alphat^{(i)} -\eta \frac{\lambda_i \alphat^{(i)} + \Deltat^{(i)}}{\| g_t\|}~.
\end{align*}
\textbf{First part:} First we show that the following always applies:
\begin{equation}\label{eq:InductCases1}
|\alphat^{(i)}| \lee 
 \max\{\eta, |\alphaz^{(i)} | \}+ \eta\min\{1+\frac{\rho\eta T^2}{2 \lambda_i},T \}~.	
 \end{equation} 
We will prove Equation~\eqref{eq:InductCases1} by showing the following  to hold for all $t\in[T]$:
\begin{equation}\label{eq:InductCases}
|\alphat^{(i)}|
\lee 
\max\{\eta, |\alphaz^{(i)} |\}+\eta
\begin{cases}
 1+\frac{\rho\eta T^2}{2 \lambda_i} &\quad \text{if $1+\frac{\rho\eta T^2}{2 \lambda_i}\leq T$ } \\ 
 t	&\quad \text{otherwise }
\end{cases}
~.
\end{equation} 
We will now prove Equation~\eqref{eq:InductCases}  by induction. The base case $t=0$  clearly holds. 
Now assume by the induction assumption that it holds for $t\in[T]$, then we divide into two cases:

\emph{Case 1:}
Suppose that $T\leq 1+\frac{\rho\eta T^2}{2 \lambda_i}$.
Since $\alphat^{(i)}$ can not change by more than $\eta$ in each round then the following holds:
\begin{align*}
|\alphatp^{(i)}| 
\lee
 |\alphat^{(i)}|+\eta  
 \lee
  \max\{\eta, |\alphaz^{(i)} |\}+\eta (t+1)~.
\end{align*}
thus the induction hypothesis of Equation~\eqref{eq:InductCases} holds in this case.

\emph{Case 2:}
Suppose that $T\geq 1+\frac{\rho\eta T^2}{2 \lambda_i}$.
In this case  our induction hypothesis asserts:
$$|\alphat^{(i)}| 
\lee 
\max\{\eta, |\alphaz^{(i)} |\}+\eta(1+\frac{\rho\eta T^2}{2 \lambda_i})~.$$
If $|\alphat^{(i)}| \geq \max\{\eta, |\alphaz^{(i)} |\}+\frac{\rho\eta^2 T^2}{2 \lambda_i}$ then using Equation~\eqref{eq:DeltaUpperBound}, we conclude that:
$$\sign(g_t^{(i)}) 
\eq
 \sign(\lambda_i \alphat^{(i)} + \Deltat^{(i)}) 
\eq
\sign(\alphat^{(i)})~.$$
The above implies:
\begin{align}\label{eq:BasicBound}
|\alphatp^{(i)}| 
&\eq
 \left| \alphat^{(i)} -\eta \frac{g_t^{(i)}}{\| g_t\|}\right| \nonumber\\
&\eq
\left| |\alphat^{(i)}| - \eta \frac{|g_t^{(i)}|}{\| g_t\|}\right|   &\mbox{$\sign(\alphat^{(i)})=\sign(g_t^{(i)})$}  \nonumber\\
&\lee
 \max\left\{ \eta \frac{|g_t^{(i)}|}{\| g_t\|}, |\alphat^{(i)}|\right\}   \nonumber\\
&\lee
  \max\{\eta, |\alpha^{(i)}_t| \}                      &\mbox{$0\leq \frac{|g_t^{(i)}|}{\| g_t\|}\leq 1$}      \nonumber\\
 &\lee
   \max\{\eta, |\alphaz^{(i)} |\}+\eta (t+1)~, &\mbox{Induction hypothesis} 
\end{align}
 thus by Equation~\eqref{eq:BasicBound}, the induction hypothesis holds.
 
If $|\alphat^{(i)}| \leq \max\{\eta, |\alphaz^{(i)} |\}+\frac{\rho\eta^2 T^2}{2 \lambda_i}$, then since each coordinate does not change more than $\eta$ in each iterartion
then we have:
\begin{align*}
|\alphatp^{(i)}| 
\lee
 |\alphat^{(i)}|+\eta 
 \lee
  \max\{\eta, |\alphaz^{(i)} |\}+\frac{\rho\eta^2 T^2}{2 \lambda_i}+\eta~,
\end{align*}
and again, the induction hypothesis holds.

\textbf{Second part:} Here we show that whenever $\lambda_i |\alphaz^{(i)}| \geq (\rho/2)\eta^2 T^2$ then the following applies:
\begin{equation}\label{eq:InductCases2}
|\alphat^{(i)}|
\lee 
\max\{\eta, |\alphaz^{(i)} |\}+\eta
~.
 \end{equation} 
We will now show by induction that Equation~\eqref{eq:InductCases2} holds for any $t\in[T]$.
The base case for $t=0$ clearly holds.
Now assume by the Induction Hypothesis that $|\alphat^{(i)}|\leq \max\{\eta, |\alphaz^{(i)} |\}+\eta$. We now divide into two cases:

\emph{Case 1:}
Suppose that $|\alphat^{(i)}|\leq \max\{\eta, |\alphaz^{(i)} |\}$. Since each coordinate does not change by more than $\eta$ in each round, then:
$$|\alphatp^{(i)}|
\lee
 |\alphat^{(i)}| +\eta 
 \lee
  \max\{\eta, |\alphaz^{(i)} |\}+\eta~,$$
thus, the induction hypothesis holds.

\emph{Case 2:} 
Suppose that $|\alphat^{(i)}|\geq \max\{\eta, |\alphaz^{(i)} |\}$. Recall that $t\in[T]$, and $|\Delta_t^{(i)}| \leq (\rho/2)\eta^2 T^2$, thus:
\begin{align*}
\lambda_i |\alphat^{(i)}|
&\gee
 \lambda_i |\alphaz^{(i)} |\\
&\gee
 (\rho/2)\eta^2 T^2 \\
&\gee
 |\Delta_t^{(i)}|~.
\end{align*}
The above implies that:
$$\sign(g_t^{(i)}) \eq \sign(\lambda_i \alphat^{(i)} + \Deltat^{(i)}) 
\eq \sign(\alphat^{(i)})$$
In this case, a similar analysis to the one appearing in Equation~\eqref{eq:BasicBound}, shows:
\begin{align*}
|\alphatp^{(i)}| 
\lee
 \max\{ \eta ,|\alphat^{(i)}| \}  \leq \max\{\eta, |\alphaz^{(i)} |\}+\eta~,
\end{align*}
 and the induction hypothesis holds.

Combining Equations~\eqref{eq:InductCases1},\eqref{eq:InductCases2}, establishes the Lemma.
\end{proof}

\subsection{Proof of Lemma~\ref{lem:NegEigLemmaGeneral}} \label{Proof_lem:NegEigLemmaGeneral}
\begin{proof}
Since  $T=\tO(\eta^{-1/2})$, and Saddle-NGD perform noisy updates once every $N_0=\Theta(\eta^{-1/2})$ rounds, it follows that the total decrease in $|\alpha_{t}^{(i)}|$ due to noisy rounds is bounded by
 $\theta \frac{T}{N_0} = \tO(\eta)$. Thus in the rest of the proof we assume noiseless  updates, and show the following to hold for some $c\in[0,1]$:
 \begin{equation}\label{eq:NegInductCasesNoiselessProof}
|\alphat^{(i)}| 
\gee
\begin{cases}
 |\alphaz^{(i)}|  &\quad \text{if $\lambda_i |\alphaz^{(i)}| > (\rho/2)\eta^2 T^2$ } \\ 
\left| |\alphaz^{(i)}| -  c\eta T\right|	&\quad \text{otherwise }
\end{cases}
~.
\end{equation} 
According to Equation~\eqref{eq:GradError} then for any $x_t\in \reals^d$ we have:
\begin{align*}
g_t \eq \nabla f(x_t) \eq \nabla \tf(x_t)+\Deltat \eq H_0 x_t + \Deltat ~,
\end{align*}
here we used $\tf(x) =0.5x^\top H_0 x$, and the notation  $\Deltat:=\nabla f(x_t) - \nabla \tf(x_t)$. Thus, combining Equation~\eqref{eq:GradError}, together with  $\| x_t-x_0\|\leq \eta T,\; \forall t\in[T]$ (which follows by the Saddle-NGD update rule), we obtain:
\begin{align*}
\|\Deltat \|
\lee 
(\rho/2)\eta^2 T^2, \quad \forall t\in[T] 
\end{align*}
Denote $\Deltat^{(i)} = e_i^\top \Deltat$, then coordinate-wise the NGD rule translates to 
\begin{align*}
\alphatp^{(i)} 
& \eq
 \alphat^{(i)} -\eta \frac{g_t^{(i)}}{\| g_t\|} \\
&\eq
\alphat^{(i)} -\eta \frac{\lambda_i \alphat^{(i)} + \Deltat^{(i)}}{\| g_t\|}
\end{align*}
\textbf{First part:} First we show that the following always applies for some $c\in[0,1]$:
\begin{equation}\label{eq:NegInductCases1}
|\alphat^{(i)}| 
\gee 
 \left| |\alphaz^{(i)}| -  c\eta T 	\right|~.
\end{equation} 
Equation~\eqref{eq:NegInductCases1} follows since  by the Saddle-NGD update rule the magnitude of the query points   can not decrease by more than $\eta T$ over $T$ rounds.

\textbf{Second part:} Here we show that the following applies whenever  $\lambda_i |\alphaz^{(i)}| > (\rho/2)\eta^2 T^2$:
\begin{equation}\label{eq:NegInductCases2}
|\alphat^{(i)}| \gee  |\alphaz^{(i)}|~. 	
\end{equation} 
We will prove Equation~\eqref{eq:NegInductCases2}, by induction. Clearly the base case $t=0$ holds.
Now assume that $|\alphat^{(i)}| \geq  |\alphaz^{(i)}| > (\rho/2\lambda_i)\eta^2 T^2$. Since $g_t^{(i)} = \lambda_i \alphat^{(i)} + \Deltat^{(i)}$, and $|\Deltat^{(i)}|\leq (\rho/2)\eta^2 T^2$, then we necessarily have:
 $$\sign( g_t^{(i)}) \eq \sign(\lambda_i \alphat^{(i)} )\eq -\sign(\alphat^{(i)})~,$$
 where we used $\lambda_i\leq 0$.
 Hence,
 $$|\alphatp^{(i)}| 
 \eq 
 \left|\alphat - \eta\frac{g_t^{(i)}}{\| g_t\|}\right| 
 \gee 
 |\alphat^{(i)}|~,$$
 and the induction hypothesis holds.
 Combining Equations~\eqref{eq:NegInductCases1}, \eqref{eq:NegInductCases2}, proves the lemma.
\end{proof}

\subsection{Proof of Corollary~\ref{cor:LambPositive_General}} \label{Proof_cor:LambPositive_General}
\begin{proof} 
We prove the corollary using Lemma~\ref{lem:PosEigLemmaGeneral}.  Note that for simplicity we ignore the $\tO(\eta)$ factors appearing in Lemma~\ref{lem:PosEigLemmaGeneral}, considering these factors yields similar guarantees.
We divide the proof into two cases which depend on the size of $|g_0^{(i)}|$. 
\textbf{High $|g_0^{(i)}|$:} Suppose that $|g_0^{(i)}|:= \lambda_i |\alphaz^{(i)}| \geq (\rho/2)\eta^2 T^2$
\begin{align}\label{eq:PosLamVal1}
\lambda_i &\left( (\alphat^{(i)})^2 - (\alphaz^{(i)})^2 \right) \nonumber\\
&\lee
  \lambda_i\left( (\max\{\eta, \alphaz^{(i)}\}+\tO(\eta))^2-(\alphaz^{(i)})^2  \right) \nonumber &\mbox{Lemma~\ref{lem:PosEigLemmaGeneral}}\\
&\lee
  \lambda_i\left( \max\{\eta^2, (\alphaz^{(i)})^2\}-(\alphaz^{(i)})^2  \right)+2\lambda_i |\alphaz^{(i)}|\tO(\eta) + \lambda_i \tO(\eta^2) \nonumber\\
&\lee
 2\beta\eta^2 +\tO( \eta^{3/2})=\tO(\eta^{3/2}) &\mbox{$\lambda_i |\alphaz^{(i)}|  \leq \beta\sqrt{\eta}$}
\end{align}
where the first inequality uses Lemma~\ref{lem:PosEigLemmaGeneral}, 
and the last inequality uses $\lambda_i |\alphaz^{(i)}| = |g_0^{(i)}| \leq \beta\sqrt{\eta}$, and $\beta = \max_i |\lambda_i|$.

\textbf{Low  $|g_0^{(i)}|$:} Suppose that $|g_0^{(i)}|:= \lambda_i |\alphaz^{(i)}| \leq (\rho/2)\eta^2 T^2$.
Denote $B=\min\{ 1+\frac{\rho\eta T^2}{2 \lambda_i},T\}$,  and note that this expression is maximized when
$\lambda_i = \frac{\rho\eta T^2}{2(T-1)}$. 
\begin{align}\label{eq:PosLamVal2}
\lambda_i &\left( (\alphat^{(i)})^2 - (\alphaz^{(i)})^2 \right) \nonumber\\
&\lee
  \lambda_i\left( \max\{\eta^2, (\alphaz^{(i)})^2\}-(\alphaz^{(i)})^2  \right)+
2  \tO(\eta)\max\lrset{\lambda_iB\eta,\lambda_i|\alphaz^{(i)}| B} + \lambda_i B^2 \tO(\eta^2) \nonumber\\
&\lee
 \lambda_i\eta^2 +\tO(\eta) \max\lrset{\frac{\rho\eta^2 T^3}{T-1},\frac{\beta\rho\eta^2 T^3}{2}}+\frac{\rho\eta T^4}{2(T-1)} \tO(\eta^2)\nonumber \\
&\lee
 \beta\eta^2 +\tO((\eta T)^3)\nonumber\\
 &\eq 
 \tO(\eta^{3/2})~,
\end{align}
here in the second inequality we used $\lambda_i B \leq \frac{\rho\eta T^3}{2(T-1)}$, we also used 
$B\leq T$, $|g_0^{(i)}|:= \lambda_i |\alphaz^{(i)}| \leq (\rho/2)\eta^2 T^2$,
and also $\lambda_i B^2 \leq \frac{\rho\eta T^4}{2(T-1)}$.
The third inequality uses $T\geq 2$, and the last inequality uses $T=\tO(\eta^{-1/2})$.

Combining Equations~\eqref{eq:PosLamVal1},\eqref{eq:PosLamVal2}, establishes the lemma.
\end{proof}

\subsection{Proof of Corollary~\ref{cor:LambNeg_General}} \label{proof_cor:LambNeg_General}
\begin{proof}
For simplicity we are going to use the following bound of Lemma~\ref{lem:NegEigLemmaGeneral} which ignores the $\tO(\eta)$ factors in the original lemma (these appear due to the noisy updates):
\begin{equation}\label{eq:NegInductCasesNoiseless}
|\alphat^{(i)}| 
\gee
\begin{cases}
 |\alphaz^{(i)}|  &\quad \text{if $\lambda_i |\alphaz^{(i)}| > (\rho/2)\eta^2 T^2$ } \\ 
\left| |\alphaz^{(i)}| -  c\eta T\right|	&\quad \text{otherwise }
\end{cases}
~,
\end{equation} 
which holds for some $c\in[0,1]$.
Including the original $\tO(\eta)$ factors in the calculations yields an additional factor of 
$-\tO(\eta^{3/2}+\eta^2T) = -\tO(\eta^{3/2})$ (recall $T=\tO(\eta^{-1/2})$).

Using Equation~\eqref{eq:NegInductCasesNoiseless} we divide into two cases:
\emph{Case 1:} Suppose that $\lambda_i |\alphaz^{(i)}| > (\rho/2)\eta^2 T^2$, then by 
Equation~\eqref{eq:NegInductCasesNoiseless}  we have $|\alphat^{(i)}| \geq |\alphaz^{(i)}|$ and therefore 
$$|\lambda_i| \left( (\alphat^{(i)})^2 - (\alphaz^{(i)})^2 \right)
\gee 
0~.$$

\emph{Case 2:} Suppose that $\lambda_i |\alphaz^{(i)}| \leq (\rho/2)\eta^2 T^2$, using
Equation~\eqref{eq:NegInductCasesNoiseless}  we get:
\begin{align*} 
|\lambda_i| \left( (\alphat^{(i)})^2 - (\alphaz^{(i)})^2 \right) 
&\gee
 |\lambda_i| \left( (|\alphaz^{(i)}| -  c\eta T 	)^2 - (\alphaz^{(i)})^2 \right) \\
&\gee
 -2c|\lambda_i \alphaz^{(i)}| \eta T  + 
 |\lambda_i|c^2\left(\eta T\right)^2 \\
 &\gee
  -2 (\rho/2)\eta^2 T^2 \eta T \\
 &\gee
  - \rho (\eta T)^3 \\
 &\gee
  - \tO(\eta^{3/2}) ~,
\end{align*}
here we used $\lambda_i |\alphaz^{(i)}| \leq (\rho/2)\eta^2 T^2$, $c\in[0,1]$, and $T=\tO(\eta^{-1/2})$.
\end{proof}

\subsection{Proof of Lemma~\ref{lem:Saddle_normGrows_General}} \label{Proof_lem:Saddle_normGrows_General}
\begin{proof}
Following is the key relation that enables us to prove the lemma:
\begin{align}\label{eq:GradEqNGD}
\nabla f(x_t) 
&\eq
 \nabla f(x_{t-1}) + \int_{0}^1 H(x_{t-1}+s(x_t -x_{t-1}))ds (x_t-x_{t-1}) \nonumber \\
&\eq
 \nabla f(x_{t-1}) + H_0(x_t-x_{t-1}) + \mu_t~,
\end{align}
where we denote  $\mu_t = \int_{0}^1 [H(x_{t-1}+s(x_t -x_{t-1}))-H_0]ds (x_t-x_{t-1})$.

Using the Saddle-NGD update rule inside Equation~\eqref{eq:GradEqNGD} we obtain:
\begin{align}\label{eq:GradEqNGD2}
\nabla f(x_t) 
\eq
\lr{I- \frac{\eta H_0}{\|\nabla f(x_{t-1}) \|}} \nabla f(x_{t-1}) +\mu_t- \theta H_0 n_t
~.
\end{align}
Due to the Lipschitzness of the Hessian, and since $\|x_t - x_{t-1} \|\leq \eta$, $\|x_t - x_0 \| \leq \eta t$
\footnote{In fact, due to the noisy updates, then with high probability we will have $\|x_t - x_0 \| \leq \eta t+ \sqrt{d}\theta\log(1/\xi)\frac{t}{N_0}$. Since $\theta = \tO(\eta),\; N_0 = \tO(\eta^{-1/2})$, choosing $\eta\leq \tO(1/d)$ we conclude that
$\sqrt{d}\theta\log(1/\xi)\frac{t}{N_0}=\tO(\sqrt{d}\eta^{3/2}t)\leq \eta t$. Thus having $\|x_t - x_0 \| \leq 2\eta t$.}
,  the last term is bounded by $\O(\eta^2 t)$:
\begin{align*}
 \|\mu_t\|  \lee \rho\eta^2 t~.
\end{align*}
Let us look at the gradient  component in the most negative direction $e_1$:
\begin{align}\label{eq:GradEqNGD2_Comp}
g_{t+1}^{(1)}  
\eq  
g_{t}^{(1)}  (1+\gamma \frac{\eta}{\|g_t\|})+\mu_t^\top e_1 + \gamma \theta n_t^{(1)} ~.
\end{align}
And recall from Algorithm~\ref{algorithm:SNGD} that $n_t$ is zero most rounds (non-zero once  every $N_0 = \tO(\eta^{-1/2})$ rounds); also recall that  for simplicity we assume that in $t=0$ the update is noisy. 

Next we will show that taking $| g_{1}^{(1)}|,N_0$ that fulfill the following two conditions imply that the  magnitude of the gradient rises beyond 
$2\beta\sqrt{\eta}$ within less than $N_0$ rounds:
\begin{align}
&\frac{\rho\eta^2 N_0}{| g_{1}^{(1)}|}  
\lee 
\frac{\gamma\sqrt{\eta}}{4\beta} \label{eq:cond1}~,\\
&N_0 
\gee
 \frac{4\beta}{\gamma\sqrt{\eta}} \log\left(\frac{2\beta\sqrt{\eta}}{| g_{1}^{(1)}|} \right) \label{eq:cond2}~.
\end{align}

Assume by contradiction that the gradient does not rise beyond $2\beta\sqrt{\eta}$ for $| g_{1}^{(1)}|,N_0$ that fulfill the above conditions. We will now show by induction that the following holds for any  $t\in \{1,\ldots, N_0-1\}$:
\begin{align}\label{eq:GradientInduction}
|g_{t}^{(1)}|
  \gee
 |g_{1}^{(1)}|(1+\frac{\gamma\sqrt{\eta}}{4\beta} )^{t-1}~.
\end{align}
The above clearly holds for $t=1$. Assume it holds for $t$, and we will now show that is holds for $t+1$. 
By Equation~\eqref{eq:GradEqNGD2_Comp}  we have:
\begin{align*}
|g_{t+1}^{(1)}|  
&\eq
 | g_{t}^{(1)}  (1+\gamma \frac{\eta}{\|g_t\|})+ \mu_t^\top e_1 | \\
&\gee
 | g_{t}^{(1)}  (1+\gamma \frac{\eta}{\|g_t\|})|- |\mu_t^\top e_1 |\\
&\gee
  | g_{t}^{(1)} |\left(1+\frac{\gamma\sqrt{\eta}}{2\beta} - \frac{\rho\eta^2 t}{| g_{t}^{(1)} |}\right) &\mbox{cont. assump. $\| g_t\|\leq 2\beta \sqrt{\eta}$}\\
&\gee
 | g_{t}^{(1)} |\left(1+\frac{\gamma\sqrt{\eta}}{4\beta}\right) &\mbox{Equation~\eqref{eq:cond1}}\\
&\gee
 |g_{1}^{(1)}|(1+\frac{\gamma\sqrt{\eta}}{4\beta} )^{t} &\mbox{Induct. hypothesis}
\end{align*}
here the second inequality uses $\|g_t\|\leq 2\beta\sqrt{\eta}$,  which is our contradiction assumption. The third inequality holds since $t\leq N_0$, and 
 by the induction hypothesis 
$|g_t^{(1)}|\geq|g_1^{(1)}|$, combining these with Equation~\eqref{eq:cond1} implies that
 $$\frac{\rho\eta^2 t}{| g_{t}^{(1)} |}
 \lee
 \frac{\rho\eta^2 N_0}{| g_{1}^{(1)}}
 \lee
 \frac{\gamma\sqrt{\eta}}{4\beta}~.$$
Thus the induction hypothesis holds for any  $t\in \{1,\ldots, N_0-1\}$.
The induction hypothesis implies that within less than $\frac{4\beta}{\gamma\sqrt{\eta}} \log\left(\frac{2\beta\sqrt{\eta}}{| g_{1}^{(1)}|} \right)$ rounds the magnitude of the gradient rises beyond $2\beta\sqrt{\eta}$, combining this with the condition of Equation~\eqref{eq:cond2} contradicts our assumption that the gradient does not rise beyond $2\beta\sqrt{\eta}$ within less that $N_0$. We therefore conclude that the gradient rises beyond $2\beta\sqrt{\eta}$ within less than $N_0$ rounds, for 
$| g_{1}^{(1)}|,N_0$ that fulfill conditions~\eqref{eq:cond1},\eqref{eq:cond2}.

It is rather technical to validate that conditions~\eqref{eq:cond1},\eqref{eq:cond2} are fulfilled by 
choosing $N_0 = \tilde{\Theta}(\eta^{-1/2})$ and $ | g_{1}^{(1)}| = \tOmega(\eta)$.
Particularly, the following choice fullfills these conditions:
\begin{align*}
|g_1^{(1)}| \eq	
 2\eta \left(\frac{4\beta\sqrt{\rho}}{\gamma}\right)^2 \log\left( \frac{\gamma^2}{8\beta\rho \sqrt{\eta}}\right), \text{ and   } 
N_0 
\eq
 \frac{4\beta}{\gamma\sqrt{\eta}} \log\left(\frac{2\beta\sqrt{\eta}}{| g_{1}^{(1)}|} \right)
\end{align*}

Note that we still need to ensure $|g_1^{(1)}|=\tOmega(\eta)$. Since we use symmetric gaussian noise, $ \theta n_0\sim  \theta \N(0,I_{d})$, then choosing $\theta = \tOmega(\eta )$, we guarantee the following to hold:
\begin{align*}
P\left(|g_{\tau}^{(1)}| 
\gee
 \tOmega(\eta)\right)\geq 1/3~.
\end{align*}
We can repeat the above process for $\log(1/\xi)$ epochs, ensuring that w.p.$\geq 1-\xi$, within $\tO(\eta^{-1/2})$ rounds we reach a point such that $\| g_t\|\geq 2\beta \sqrt{\eta}$.
\end{proof}

\subsection{Proof of Lemma~\ref{lem:ImproveLem_General}}
\label{Proof_lem:ImproveLem_General}
\begin{proof}
Recall that according to Lemma~\ref{lem:Saddle_normGrows_General} then the gradient goes beyond $2\beta \sqrt{\eta}$ within $t=\tO(\eta^{-1/2})$  steps.
In order to prove the lemma, we first relate $f$ to its quadratic approximation $\tf$ around $x_0$,  then 
we will show that $\tf(x_t) \leq \tf(x_0)-\Omega(\eta)$.

Given $x$, there always exists $x'\in[x_0,x]$ such that the following holds:
\begin{align*}
f(x) \eq
 f(x_0) + g_0^\top(x-x_0) +\frac{1}{2}(x-x_0)H_{x'}(x-x_0)~.
\end{align*}
where $H_{x'} = \nabla^2 f(x')$.
Using the above equation, and the Lipschitzness of the Hessian, we may bound the difference between the original function and its quadratic approximation around $x_0$ as follows:
\begin{align}\label{eq:preOriginal_QuadApprox}
|f(x) - \tf(x)| 
&\eq
\left| \frac{1}{2}(x-x_0)(H_{x'}-H_{x_0})(x-x_0)\right|\nonumber \\
&\lee
 \frac{\rho}{2}\|x-x_0\|^3~.
\end{align}
According to Lemma~\ref{lem:Saddle_normGrows_General}, 
w.p.$\geq 1-\xi$ we have $t\leq \tO(\eta^{-1/2})$. 
Combined with Equation~\eqref{eq:preOriginal_QuadApprox} we conclude that
\begin{align}\label{eq:Original_QuadApprox}
|f(x_t) - \tf(x)| 
&\lee
 \tO(\eta^{3/2})~.
\end{align}
We now turn to bound $\tf(x_t)-\tf(x_0)$. 
The bound requires the use of Corollary~\ref{cor:LambPositive_General}, which uses the following expression of 
$\tf(x)$ appearing also in Equation~\eqref{eq:QuadApprox2}:
$$\tf(x) \eq \frac{1}{2}x^\top H_0 x ~.$$
Thus, we use the above representation of $\tf$ in the remainder of the proof.
Using the eigen-decomposition according to  $H_0$ we may write: 
\begin{align*}
x_0 \eq \sum_i \alphaz^{(i)} e_i;
\quad 
x_t \eq \sum_i \alphat^{(i)} e_i~.
\end{align*}
Denoting $\tg_t:=\nabla \tf(x_t)=H_0 x_t$, we may also write:
\begin{align*}
\|\tg_0\|^2 \eq \sum_i \lambda_i^2 (\alphaz^{(i)})^2 ;
\quad 
\|\tg_t\|^2 \eq \sum_i \lambda_i^2 (\alphat^{(i)})^2~.
\end{align*}
Recall that since the quadratic approximation $\tf$ is taken around $x_0$, then $\tg_0 = g_0$. Also, utillizing 
Equation~\eqref{eq:GradError}, we conclude that the following holds for any $t\leq\tO(\eta^{-1/2})$:
\begin{align*}
\|\tg_t\|^2 
&\gee
 \| g_t\|^2 - \|g_t-\tg_t \|^2 \\
& \gee
  \| g_t\|^2- \frac{\rho^2}{4}(\eta t)^4 \\
& \ge
  \| g_t\|^2- \tO(\eta^2)~. 
\end{align*}
where we used $t\leq\tO(\eta^{-1/2})$.
Since we assume $\| g_t\|\geq 2\beta\sqrt{\eta}$,  the above implies that for a small enough $\eta$
we must have $\| \tg_t\|^2 \geq 3\beta^2\eta$.
Thus we have:
\begin{align}\label{eq:ValDescend1}
\tf(x_t)& -\tf(x_0) \nonumber\\
&= \sum_{\lambda_i\geq 0} \lambda_i\left( (\alphat^{(i)})^2-(\alphaz^{(i)})^2\right)\nonumber \\
&\quad+ \sum_{\lambda_i<0} \lambda_i\left( (\alphat^{(i)})^2- (\alphaz^{(i)})^2\right) \nonumber \\
&\lee
 d  \tO(\eta^{3/2})-\sum_{\lambda_i<0} |\lambda_i|\left( (\alphat^{(i)})^2- (\alphaz^{(i)})^2\right)
\end{align}
where we used Corollary~\ref{cor:LambPositive_General}.
We are now left to show that $\sum_{\lambda_i<0} |\lambda_i|\left( (\alpha_t^{(i)})^2- (\alpha_0^i)^2\right)= \Omega(\eta)$.
Using $\|\tg_0\|^2\leq \beta^2\eta$ and $\|\tg_t\|^2\geq 3\beta^2\eta$ we get:
\begin{align}\label{eq:ValDescend2}
&\beta \sum_{\lambda_i<0} |\lambda_i|\left( (\alpha_t^{(i)})^2- (\alpha_0^i)^2\right) \nonumber\\
&\gee
 \beta\sum_{\lambda_i<0} |\lambda_i| \left| (\alpha_t^{(i)})^2- (\alpha_0^i)^2\right|- 
d\tO(\eta^{3/2}) \nonumber\\
&\gee
 \sum_{\lambda_i<0} \lambda_i^2\left| (\alpha_t^{(i)})^2- (\alpha_0^i)^2\right|- 
d\tO(\eta^{3/2}) \nonumber\\
&\gee
 \|\tg_t \|^2 - \| \tg_0\|^2  \nonumber\\
 &\qquad- \sum_{\lambda_i\geq0} \lambda_i^2\left( (\alpha_t^{(i)})^2- (\alpha_0^i)^2\right)-d\tO(\eta^{3/2}) \nonumber\\
&\gee
 3\beta^2\eta - \beta^2 \eta -d  \tO(\eta^{3/2})  \nonumber \\
&\eq 2\beta^2\eta - d  \tO(\eta^{3/2})~,
\end{align}
where  the first inequality  uses Corollary~\ref{cor:LambNeg_General}, the second inequality
 uses the $\beta = \max_i |\lambda_i|$, and also Corollary~\ref{cor:LambPositive_General}.
Combining Equations~\eqref{eq:Original_QuadApprox}, \eqref{eq:ValDescend1}, \eqref{eq:ValDescend2}, and taking 
$\eta= \tO(1/d^2)$, the lemma follows.
\end{proof}


\end{document}